%% file: main.tex
\documentclass[runningheads]{llncs}
\usepackage[T1]{fontenc}
\usepackage{hyperref}       % hyperlinks
\usepackage[sort&compress,square, numbers]{natbib}
\usepackage{graphicx}
\usepackage{bbold}
\usepackage{amsfonts}
\usepackage{amsmath}
\usepackage[ruled]{algorithm2e}
\usepackage{multirow}
\usepackage{caption}
\usepackage{booktabs}
\usepackage{subcaption}
\usepackage{wrapfig}
\usepackage[dvipsnames]{xcolor}
\usepackage{tikz}
\usepackage{tikz-cd}

\usepackage{color}

\begin{document}
\title{Leveraging Human Feedback for Semantically-Relevant Skill Discovery}
%
%\titlerunning{Abbreviated paper title}
% If the paper title is too long for the running head, you can set
% an abbreviated paper title here
%
\author{Maxence Hussonnois\inst{1}\orcidID{0000-0002-0721-1850} \and
Thommen George Karimpanal\inst{2}\orcidID{0000-0001-8918-3314} \and
Santu Rana\inst{1}\orcidID{0000-0003-2247-850X}}
\authorrunning{M. Hussonnois et al.}
% First names are abbreviated in the running head.
% If there are more than two authors, 'et al.' is used.
%
\institute{$A^2 I^2$, Deakin University, Geelong, Australia \\ 
\email{{m.hussonnois@deakin.edu.au}, {santu.rana@deakin.edu.au}} 
\and
School of IT, Deakin University, Geelong, Australia. \\
\email{thommen.karimpanalgeorge@deakin.edu.au}}
\maketitle              % typeset the header of the contribution
\begin{abstract}
\input{sections/1_abstract}

\keywords{Skill Discovery  \and Human Feedback \and Reinforcement Learning.}
\end{abstract}

\input{sections/2_introduction}
\input{sections/3_related_work}

\input{sections/4_preliminaries}

\input{sections/4_methods}
\input{sections/5_experiments}
\input{sections/6_limitations}
\input{sections/6_conclusion}
\bibliographystyle{splncs04}
\bibliography{sample.bib}

\newpage
\appendix
\input{sections/appendix}

\end{document}

%% file: sections/1_abstract.tex
Unsupervised skill discovery in reinforcement learning aims to intrinsically motivate agents to discover diverse and useful behaviours. However, unconstrained approaches can produce unsafe, unethical, or misaligned behaviours. To mitigate these risks and improve the practical desireability of discovered skills, recent work grounds the discovery process by leveraging human preference feedback. However, preference-based approaches are feedback-inefficient and inherently ill-equipped to deal with skill spaces composed of a variety of different skills such as running, jumping, walking, etc. To overcome this limitation, we introduce semantic labelling, a novel and feedback-efficient approach that leverages human cognitive strengths to identify and label semantically meaningful behaviours. Based on semantic labelling, we propose \textit{Semantically Relevant Skill Discovery (SRSD)}\footnote{Link to a \href{https://github.com/HussonnoisMaxence/SRSD}{github repository}}, a novel human-in-the-loop approach that collects semantic labels from human feedback and learns a reward function to encourage skills to be more semantically diverse and relevant. Through our experiments in a 2D navigation environment and four locomotion environments, we demonstrate that SRSD can improve semantic diversity and discover relevant behaviours while scaling effectively to a large variety of behaviours.

%% file: sections/2_introduction.tex
\section{Introduction}
Given a well-designed reward function, deep reinforcement learning~\citep{sutton2018reinforcement} can effectively solve a variety of decision-making problems like video games~\citep{Mnih2015}, board games~\citep{silver2016mastering}, and robotic tasks~\citep{Lillicrap2016}. However, designing reward functions that enable the training of general-purpose agents capable of exhibiting a diverse range of behaviours remains a fundamental open problem. In contrast, animals naturally learn a repertoire of behaviours, often called skills, which are both reusable and can be combined to solve new problems. Inspired by this observation, unsupervised skill discovery~\citep{eysenbach2018, Sharma2020Dynamics-Aware, Campos2020ExploreDA, park2021lipschitz} aims to intrinsically motivate agents to autonomously learn diverse sets of skills, achieving notable success in domains such as locomotion and manipulation. 
\begin{figure}[htbp]
    \centering
    \includegraphics[width=1\textwidth]{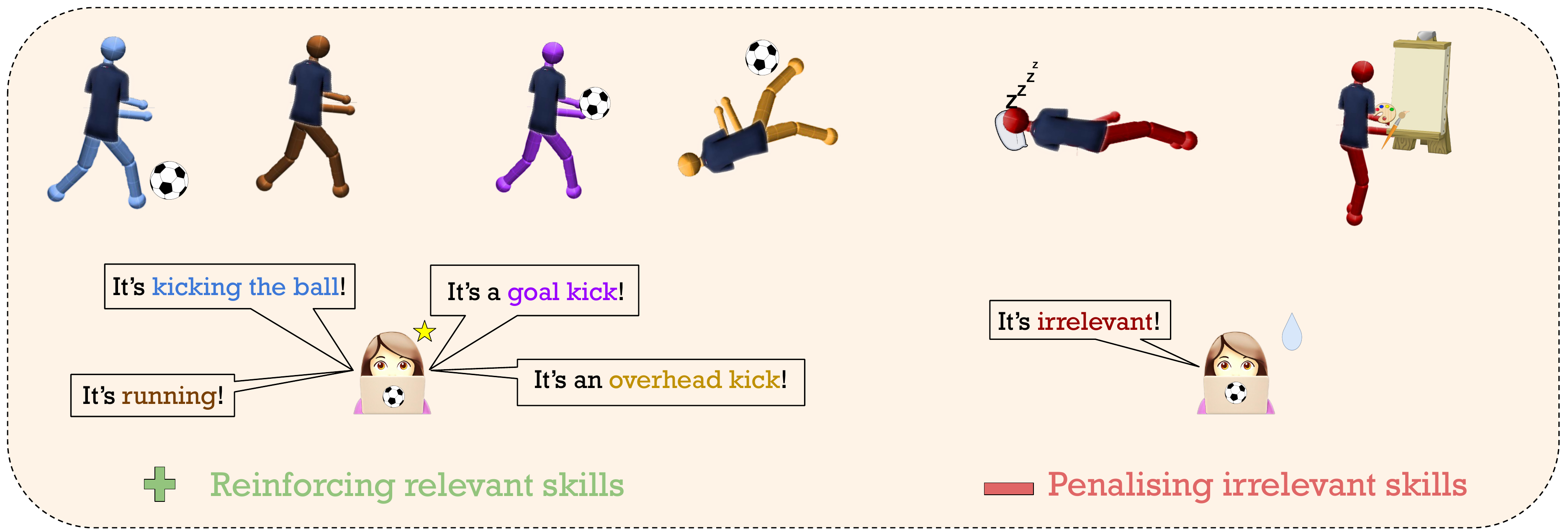} 
    \caption{It is challenging to learn many types of skills which are all contextually relevant. For instance, football needs many skills such as running and kicking the ball. While sleeping and painting may be distinct, possible activities irrelevant to the current context (playing football). By querying humans to label semantically relevant skills ("running" or "kicking the ball") while labelling other behaviours as irrelevant, our method discovers diverse and contextually relevant skills.}
    \label{fig: introduction|example}
\end{figure}

Despite promising results, these approaches fail at consistently learning desirable skills and often learn misaligned, unsafe or irrelevant behaviours~\citep{kim2023safeskill,hussonnois2023controlled}. To increase the desirability of the skill learned, recent approaches suggest guiding skill discovery by initialising training with fixed prior knowledge in the form of expert demonstrations~\citep{wan2024lotus}, videos~\citep{xu2023xskill}, safety constraints~\citep{kim2023safeskill}, or prompts for language models~\citep{rho2024language}. Unfortunately, these approaches are unable to adapt to unforeseen or changing environments. This underscores the need for methods that proactively incorporate human guidance during training, enabling agents to continuously adjust to evolving situations.

Recent works~\citep{hussonnois2023controlled, hussonnois2025human} proposed involving humans in the skill discovery loop by learning a reward function from human preferences. However, the preference feedback process becomes increasingly inefficient as the number of different types of preferences increases ~\citep{zhao2016learning}. This limitation is particularly problematic for skill discovery, which naturally involves a spectrum of distinct behaviours, such as running and kicking the ball (see Figure~\ref{fig: introduction|example}). Thus, these methods often result in skill sets that, although aligned, lack semantic diversity (such as only discovering diverse ways of running forward, while ignoring other semantically relevant skills). Alternatively, humans possess an innate cognitive ability to intuitively group observed behaviours into meaningful, “basic-level” categories, such as kicking the ball, goal-kicking, or running. This phenomenon is extensively documented in cognitive psychology~\citep{Rosch1988-ROSPOC}, which shows that people can naturally assign behaviours to such categories without explicit training, thereby providing maximal information with minimal cognitive effort. Importantly, this categorisation process remains efficient even as the number of skill types increases. Thus, leveraging human semantic categorisation ability into the skill discovery loop holds promise for both enhancing semantic diversity among discovered skills and achieving greater feedback efficiency compared to traditional preference-based methods.

In this work, we leverage human abilities in semantic categorisation to improve both semantic diversity, that is, the range of skill types discovered, and feedback efficiency within the skill discovery process. To achieve this, we introduce a semantic labelling procedure (see Figure~\ref{fig: introduction|example}): for each trajectory, the human labeller first indicates whether any relevant behaviour is present; if not, the trajectory is marked as 'irrelevant'. Otherwise, the labeller specifies the most appropriate semantic category. We provide both theoretical and empirical evidence that this semantic labelling approach scales more effectively than traditional preference-based feedback as the number of skill types increases, making it especially suited to learning a diverse set of behaviours. Building on this, we present \textit{Semantically Relevant Skill Discovery (SRSD)}, a framework that collects semantic labels from human feedback, trains a semantic predictor, and introduces a reward function to explicitly encourage the learning of semantically meaningful skills. We show that skill discovery methods augmented with our reward formulation achieve superior results on DeepMind Control Suite benchmarks. Furthermore, we demonstrate that incorporating a distributional representation of critics further boosts the performance when learning from multiple intrinsic reward functions.
In summary, the main contributions are:

\begin{itemize}
  \item \emph{Semantic Labelling}, a novel feedback-efficient mechanism that learns a semantic relevance predictor from human feedback.
  \item \emph{Semantically Relevant Skill Discovery (SRSD)} a novel framework that leverages semantic labels to discover more semantically diverse skills.
  \item Mitigating aleatoric uncertainty when learning with multiple intrinsic reward functions by incorporating a distributional perspective.
  \item Qualitative and quantitative evaluation of SRSD, with suitable comparisons with existing baselines for learning diverse and fine-tuned skills.
\end{itemize}

%% file: sections/3_related_work.tex
%% Illustration of the method
\begin{figure*}[htbp]
    \centering
    \includegraphics[width=1\textwidth]{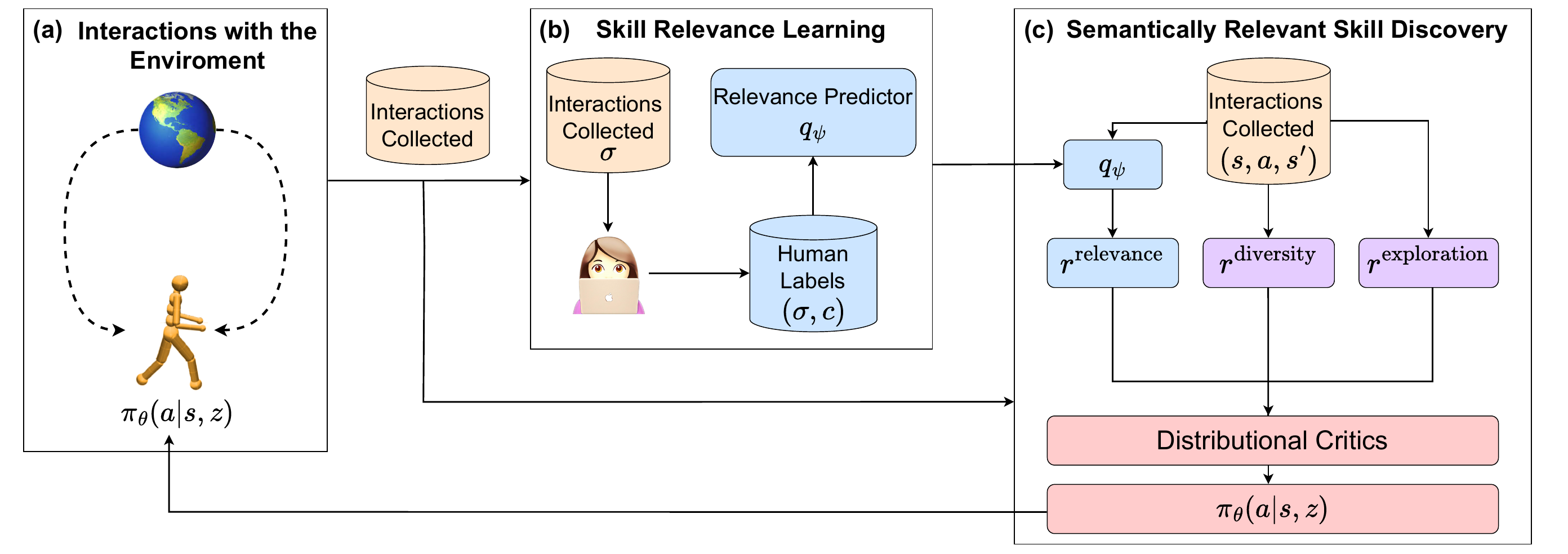} 
    \caption{
  Illustration of our method. First (a), SRSD explores potential relevant type of behaviours by maximising skill diversity and exploring with $r^{\text{div}}$ and $r^{\text{exploration}}$. Then (b), we start eliciting  semantic labels, and learn a semantic predictor from these labels. This semantic predictor is used to additionally reward skills to be more semantically relevant with $r^{\text{relevance}}$ (c).}
\label{fig: illustration|srsd}
\end{figure*}

\section{Related Work}
\subsubsection{Unsupervised Skill Discovery.} Unsupervised skill discovery aims to simultaneously explore the space of learnable behaviours and learn useful and diverse skills \citep{SUTTON1999181}. In this regard, previous works~\cite{eysenbach2018, Gregor2016, achiam2018} relied on the reverse mutual information between states and skills to train skills to be distinguishable. Recent works~\cite{Campos2020ExploreDA, laskincontrastive} improve the exploration capability by using the forward form of the mutual information with contrastive representation learning~\citep{chen2020simple, pmlr-v9-gutmann10a}. Alternative works \cite{park2021lipschitz, park2023controllability, park2023metra} suggest distance-maximising objectives to learn dynamic skills in locomotion and pixel-based environments. Yet, these methods do not consider the environment's underlying context, and tend to produce dangerous or unaligned skills. To this end, the safety-aware skill discovery framework~\citep{kim2023safeskill} was proposed to include user-pre-defined constraints. Alternatively, recent work~\citep{rho2024language, kim2024s} examines how to incorporate context into skill discovery using LLMs. However, these methods rely on user pre-training-based domain knowledge to actively guide the process towards more desirable skills. Our approach instead focuses on reactive guidance of skills, incorporating the human context only when requested, leading to a more realistic skill discovery process.

\subsubsection{Preference-based Skill Discovery.} To discover more desirable skills, preference-based skill discovery approaches are based on preference-based RL (PbRL)~\citep{Christiano2017, 2021pebble, park2022surf}. Alternatively, CDP~\citep{hussonnois2023controlled} and HaSD~\citep{hussonnois2025human} use preferences with human-in-the-loop~\citep{2021pebble} methods to discover more desirable and aligned skills. Our work also employs a human-in-the-loop approach but introduces semantic labelling. Semantic labelling focuses on minimal requirements for behaviour classification, making it more effective than preferences in environments with numerous semantic behaviours like running, walking and jumping.

%% file: sections/4_preliminaries.tex
\section{Preliminaries}
\label{subsec: prelim|setting}
\subsubsection{Skill Discovery Settings.} We consider a Markov Decision Process (MDP) without a reward function, defined as \begin{math}\mathcal M = (\mathcal S, \mathcal A, \mathcal P) \end{math}, where \begin{math}\mathcal S\end{math},\begin{math}\mathcal A\end{math} and \begin{math}\mathcal P\end{math} respectively denote the state spaces, action spaces and transition dynamics. Consistent with prior work \citep{eysenbach2018, Campos2020ExploreDA,park2023controllability}, we note random variables for states and actions as $S$ and $A$. We then consider a latent variable $z \in Z$, on which we condition our policy. We refer to a policy conditioned on a fixed $z$, with parameter $\theta$, as a skill and it is denoted as \begin{math}\pi_\theta(a|s,z)\end{math}. In this work, we follow Contrastive Skill Discovery methods \citep{laskincontrastive, liu2023comsd} that train $\pi_\theta(a|s,z)$ by maximising the forward form of the mutual information $I(S;Z)$ between $S$ and $Z$:
\begin{equation}
\begin{aligned}
\label{eq: mi_objective}
   I(S;Z) = \mathcal H(S) - \mathcal H (S|Z),
\end{aligned}
\end{equation}
where $\mathcal H(\cdot)$ denotes the Shannon entropy. Contrastive Skill Discovery methods \citep{laskincontrastive, liu2023comsd} focus on the forward form of mutual information to explicitly optimise state entropy $\mathcal H(S)$ to improve exploration. We adopt this method for skill discovery due to its state of the art performance on URLB \citep{laskin2021urlb}.

\subsection{Contrastive Skill Discovery}
\label{subsubsec: CSD|CRL}
%\paragraph{\textbf{Contrastive Skill Discovery}}
Contrastive skill discovery methods leverage contrastive representation learning to estimate condition entropy and particle-based exploration to estimate state entropy. In the following sections, we will detail both components. 

\subsubsection{Contrastive Representation Learning.} Contrastive Skill Discovery methods use contrastive representation learning to approximate conditioned state entropy $\mathcal H (S|Z) $ with a discriminator $q_\phi$ capable of handling high continuous skill dimensions, improving skill diversity. To this end, we maximise the following lower bound for the conditional entropy (see \citep{liu2023comsd} for a detailed derivation): 

\begin{equation}
\begin{aligned}
\label{eq: lower_bound}
  -\mathcal H (S|Z) \geq \mathbb E [\log q_\phi(s,s',z)]  \quad\text{with}\quad q_\phi(s,s',z) = \frac{g_{\phi_1}(s,s')^\top g_{\phi_2}(z)}{\left\lVert g_{\phi_1}(s,s') \right\rVert \left\lVert g_{\phi_2}(z) \right\rVert T},
\end{aligned}
\end{equation}

where $q_\phi$ is a parameterised function used to estimate $p(s,s',z)$. More precisely, $q_\phi$ is defined with a similar inner product proposed in~\citep{chen2020simple}. $T$ is a temperature parameter and $g_{\psi_{1,2}}$ are neural networks used to learn a state and a skill embedding. Then both encoders $g_{\phi_{1,2}}$ are trained to to maximise similarity between state transitions and corresponding skills by optimising the following Noise Contrastive Estimation (NCE) loss \cite{pmlr-v9-gutmann10a}:
\begin{eqnarray}
\label{eq: nce_loss}
\mathcal{L}_{NCE}= \log q_\phi(s,s',z) -\log \frac{1}{N}\sum^N_{j=1}\exp(q_\phi(s_j,s_j',z))),  
\end{eqnarray}
where $s,s'$ are sampled using skill $z$ and $s_j, s_j'$ are sampled using $N-1$ distinct skills. Finally, the trained discriminators can be used to intrinsically reward skills \begin{math}\pi_\theta(a|s,z)\end{math} to generate diverse behaviours for different random variables $z \in Z$ with the following reward $r^{\text{div}} = q_{\phi}(s, s'|z)$.

\subsubsection{Particle-Based Exploration.} As mentioned earlier in this section, optimising skills \begin{math}\pi_\theta(a|s,z)\end{math} solely with $r^{\text{div}}$ is not enough to ensure exploration of the behaviour space. To mitigate this issue, CIC proposes rewarding skills for maximising state entropy by estimating the state entropy $\mathcal H(S)$ with a particle-based entropy estimator, APT~\cite{liu2021behavior}. Specifically, APT estimates the state entropy by computing the Euclidean distance between the embedding of a state and all its $k$-nearest neighbours resulting in the following reward:
\begin{eqnarray}
\label{eq: apt_particle estimation}
r^{\text{exp}} = \mathcal H_{APT}(S) \propto \frac{1}{k}\sum_{h^* \in N_k} \log \left\lVert h-h^*  \right\rVert ,
\end{eqnarray}
where $h_i$ is an embedding of $(s, s')$ from $g_{\phi_1}$ and $h^*$ is the $k$NN embedding.%, $k$ is the number of nearest neighbours.

%% file: sections/4_methods.tex
\section{Semantically-Relevant Skill Discovery (SRSD)}
To improve semantic diversity (diversity in the type of skills) in skill discovery, we consider discovering diverse skills that have a high chance of being categorised among relevant semantics by a human overseer. First, we describe the problem formulation for discovering diverse skills that fall into relevant semantics in Section \ref{subsec: srsd|setting}. We then introduce our novel semantic labelling feedback mechanism and the resulting relevance reward in Section \ref{subsec: method|human_feedback}. Finally, in Section \ref{subsec: method|SRSD} we present our overall SRSD method which is illustrated in Figure~\ref{fig: illustration|srsd} and described in Algorithm~\ref{alg: SRSD}.

\subsection{Semantically-Relevant Skill Discovery Settings}
\label{subsec: srsd|setting}
We extend the skill discovery setting introduced in Section \ref{subsec: prelim|setting} to incorporate the semantic variable $c \in \mathcal{C}$, which denote different type of behaviour. Semantics $\mathcal{C}$ are composed of multiple relevant semantics (relevant type of behaviour) $\mathcal{C}^+$ \footnote{In this work we consider a fixed number of relevant classes throughout the pre-training.} and one irrelevant semantic $\mathcal{C}^-$ (corresponding to all irrelevant types of behaviour). Our goal in this work is to learn skills \begin{math}\pi_\theta(a|s,z)\end{math} that are diverse and that belong to relevant semantics $\mathcal{C}^+$.

To this end we proposed to follow a multi-objective optimisation problem, where the objectives are the skill discovery objective presented in Section \ref{subsec: prelim|setting} and our novel relevance objective.
We formulate our relevance objective as maximising the probability that trajectories $\tau$ generated by $\pi_\theta(a|s,z)$ being labelled as one of the relevant semantics $\mathcal{C}^+$. Thus we propose to train \begin{math}\pi_\theta(a|s,z)\end{math} to maximise the following multi-objective problem: 
\begin{eqnarray}
\label{eq: srsd_objective}
\max_{\pi_{\theta}} (I(S;Z), p(c\in \mathcal{C}^+|\tau) ),
\end{eqnarray}
where the resulting skills are diverse, with a high chance of being classified among a relevant semantics. Next, we describe how to train \begin{math}\pi_\theta(a|s,z)\end{math} to maximise our novel relevance objective.

\subsection{Learning Semantically-Relevant Behaviour from Human Feedback}
\label{subsec: method|human_feedback}
To maximise the probability of $\tau$ induced by $\pi_\theta(a|s,z)$ to be labelled as one of the relevant semantics $\mathcal{C}^+$, it is necessary to access the semantic distribution across trajectory $p(c|\tau)$. However such distributions cannot be observed or obtained from the environment. Thus we propose to learn it through human feedback by introducing "Semantic Labelling", a novel human-feedback mechanism.

\subsubsection{Semantic Labelling Process.} Semantic labelling involves incorporating a human-in-the-loop approach, where a human observer categorises trajectory segments as belonging to either relevant or irrelevant semantics. Specifically, during semantic labelling, the human is provided with a trajectory segment of horizon $H$, such as $\sigma = ((s_k, a_k), \ldots, (s_{k+H}, a_{k+H})) \in (\mathcal{S} \times \mathcal{A})^H$ and assigns it a label $c \in \mathcal{C}$. Recognising cognitive effort and time as a relevant constraint for humans, we assume a fixed budget $B$ of human feedback, represented as $\zeta^B = \{\sigma_i, c_i\}^B_{i=0}$ during the training process. %We provide more detail of the human feedback process in Section \ref{subsubsec: method|queries}.

\subsubsection{Scaling with respect to $|\mathcal{C}|$: Semantic vs Preference Feedback.} To show that semantic-labelling feedback is more suitable than preference to handle multiple semantics, we first define preference over multiple semantics. As pair-wise comparison (preferences) cannot identify multiple semantics alone \citep{zhao2016learning}, we sample preference feedback over multiple semantics by first selecting a relevant semantic, and then sampling preference feedback. Then we analyse for each feedback mode the probability of sampling a query that will provide information about a relevant semantic $c\in C^+$. We describe that the order for each mode on $|\mathcal{C}|$ is: 
\begin{proposition}\label{proposition: scaling_c}
Let $|\mathcal{C}| \geq 3$ denote the total number of semantics. Then, the probability that a single preference feedback query is informative about a relevant semantic is $O\left(\frac{1}{|\mathcal{C}|}\right)$, whereas the probability of a semantic-label query remains $O(1)$, independent of $|\mathcal{C}|$.
\end{proposition}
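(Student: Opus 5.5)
The proof rests on a simple probabilistic model of how queries are drawn, followed by a direct count for each feedback mode. I will assume that the trajectory segments shown to the human are sampled independently. Each segment's semantic $c$ is taken to be uniform over $\mathcal{C}$, or more generally to satisfy $p(c)\le \alpha/|\mathcal{C}|$ for each individual semantic, where $\alpha$ is a constant. I will also write $\mathcal{C}=\mathcal{C}^+\cup\{\mathcal{C}^-\}$ and assume $|\mathcal{C}^+|=|\mathcal{C}|-1\ge 2$, which is where the hypothesis $|\mathcal{C}|\ge 3$ is used. The key modelling step is to make ``informative about a relevant semantic'' precise for each mode, following the construction described just before Proposition~\ref{proposition: scaling_c}:
\begin{itemize}
\item A \emph{semantic-label} query on $\sigma$ is informative whenever the label it returns places $\sigma$ in $\mathcal{C}^+$, or identifies it as $\mathcal{C}^-$.
\item A \emph{preference} query is made with respect to a single target semantic $c^\star\in\mathcal{C}^+$, chosen first. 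It compares two segments $\sigma_1,\sigma_2$ and is informative about $c^\star$ only if at least one of the two segments actually exhibits $c^\star$. When neither does, the comparison carries no signal about $c^\star$, since pairwise comparisons alone cannot identify semantics \citep{zhao2016learning}.
\end{itemize}

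For the preference bound, I condition on the chosen target $c^\star$. The probability that $\sigma_1$ or $\sigma_2$ has semantic $c^\star$ is at most $2\max_c p(c)\le 2\alpha/|\mathcal{C}|$ by a union bound. Averaging over $c^\star$ then gives a total probability of $O(1/|\mathcal{C}|)$. The same argument extends to a preference over any fixed constant number of segments $m$, giving $m\alpha/|\mathcal{C}|=O(1/|\mathcal{C}|)$. I would also remark that under uniform sampling the bound is tight, at $1-(1-1/|\mathcal{C}|)^2=\Theta(1/|\mathcal{C}|)$, so the rate cannot be improved.

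For semantic labelling, every query returns some $c\in\mathcal{C}$. Either $c\in\mathcal{C}^+$, in which case the query directly reveals a relevant semantic, or $c=\mathcal{C}^-$, in which case it is negative evidence that trains the relevance predictor $p(c\in\mathcal{C}^+\mid\tau)$. Under the stricter reading that only labels in $\mathcal{C}^+$ count, the probability is still $\sum_{c\in\mathcal{C}^+}p(c)=(|\mathcal{C}|-1)/|\mathcal{C}|\ge 2/3$ under the uniform model. Either way the probability is bounded below by a constant and above by $1$, so it is $O(1)$ and independent of $|\mathcal{C}|$. Comparing the two modes gives the claimed separation.

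The main obstacle is the modelling step: the statement is only meaningful once a sampling distribution over semantics and a notion of ``informative'' are fixed. The result depends on per-semantic probabilities decaying like $1/|\mathcal{C}|$. If the irrelevant class dominated the distribution, for instance, the relevant-only bound for labelling would change. I would therefore state the uniform, or bounded-ratio, assumption explicitly. I would also note that the preference side still holds under any distribution whose maximum per-semantic mass is $O(1/|\mathcal{C}|)$.
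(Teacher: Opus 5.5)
Your proof is correct under its assumptions, but it models the problem differently from the paper. The paper does not take semantics uniform over $\mathcal{C}$. The irrelevant class keeps a fixed mass $p$, independent of $|\mathcal{C}|$, and the remaining $1-p$ is split evenly over the $|\mathcal{C}|-1$ relevant classes. An informative preference query is the ordered event $\sigma_1\in c_i$, $\sigma_2\notin c_i$, and the paper computes it exactly as $\frac{1-p}{|\mathcal{C}|-1}\cdot\frac{|\mathcal{C}|-2+p}{|\mathcal{C}|-1}=\Theta(1/|\mathcal{C}|)$, against $1-p$ for a semantic label.

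You instead use a union bound on the larger event that at least one segment exhibits $c^\star$. The paper's event is contained in yours, so your bound also covers the paper's notion of informativeness, and it extends to $m$-way comparisons and to non-uniform distributions. The paper's exact computation instead gives explicit constants and shows that the irrelevant fraction $p$ affects both modes without changing their rates. You also make explicit that the labelling probability must be bounded \emph{below}, since $O(1)$ alone is vacuous for a probability; the paper leaves this implicit.

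One thing to adjust: requiring $p(c)\le\alpha/|\mathcal{C}|$ for every semantic, and using $\max_c$ in the union bound, needlessly caps the irrelevant class. That excludes the paper's more natural model, in which the catch-all class $\mathcal{C}^-$ keeps constant mass. Your argument only uses $p(c^\star)=O(1/|\mathcal{C}|)$ for $c^\star\in\mathcal{C}^+$. With a constant irrelevant mass $p<1$, the relevant-only labelling probability is simply $1-p$. So your worry that a dominant irrelevant class would change the labelling side only applies if $p\to 1$ as $|\mathcal{C}|$ grows.
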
 
The proof is provided in Appendix \ref{subsec: apdx|proof}. This proposition explains that semantic-label feedback scale well to the number of semantics as it is independent of $|\mathcal{C}|$. In contrast, the probability of preference feedback being informative of a relevant semantic will diminish as $|\mathcal{C}|$ increases.
 \begin{figure*}[htbp]
    \includegraphics[width=1.0\textwidth, height=0.125\textheight]{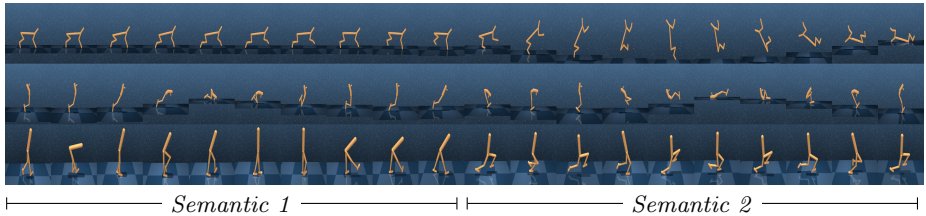} 
    \begin{tikzpicture}
          \coordinate (A) at (-6.5,0);
          \coordinate (B) at ( -0.5,0);
        \draw[|-|] (A) -- (B) node[midway,fill=white] {\emph{Semantic 1}};
    \end{tikzpicture}
    \begin{tikzpicture}
          \coordinate (A) at (-0.5,0);
          \coordinate (B) at (5.5,0);
         \draw[|-|] (A) -- (B) node[midway,fill=white] {\emph{Semantic 2}};
    \end{tikzpicture}
    \caption{Qualitative visualisations of skill discovered with Cheetah, Hopper and Walker. Cheetah learns to run forward and backflip while Hopper learns to hop forward and backflip and finally Walker learns to balance itself by walking and running.}
    \label{fig: experiments|dmc|qualitative}
\end{figure*}

\subsubsection{Semantic Predictor.} We model the semantic distribution across trajectory $p(c|\tau)$ with a parametric semantic predictor $p_\psi(c|\sigma)$. As we assume all behaviours to follow the Markov property, we define the semantic predictor with the scoring function $q_\psi$ in Equation \ref{eq: relevance_predictor}:
\begin{equation}
\label{eq: relevance_predictor}
 p_{\psi} (c|\sigma)= \frac{\exp(\sum_t^H q_\psi(c|s_t,a_t))}{\sum_{j=0}^{|\mathcal{C}|}\exp(\sum_t^H q_\psi( c_j|s_t,a_t))}.
\end{equation}

Intuitively, this means that the probability of a trajectory segment $\sigma$ being predicted as $c$ depends exponentially on the sum over the segment of an underlying score function. The scoring function $q_\psi$ is modelled as a neural network with parameters $\psi$ and is trained with a cross-entropy loss. By minimising this loss function, the scoring function learns to generate high scores for the corresponding relevant or irrelevant semantics. 

\subsubsection{Collecting Human Feedback and Active Sampling.} During training, we repeatedly query a human overseer to collect feedback up to a maximum budget. Since some semantics are rare or emerge only in the later stages, a major challenge is ensuring a fair representation of all relevant classes in the dataset while avoiding over-representation of irrelevant examples. To address this, in each query session, we apply a filtering process by dividing the batch of segments among relevant semantics using our semantic predictor. We then sample uniformly among relevant semantics until a predetermined quota is met. This avoids oversampling and enables a more balanced and efficient use of human feedback. We detail the process in the Algorithm~\ref{alg: Active_Sampling} in the Appendix.

\subsubsection{Semantically Relevant Rewards Function.} Finally, using the scoring function $q_\psi$ previously introduced, we propose the following novel semantically relevant reward function:
\begin{eqnarray}
\label{eq: reward_z_c_2}
r^{\text{rel}} = \log \frac{\exp( q_\psi (f(c|z)|s,a))}{\sum_{j=0}^{|C|} \exp(q_\psi (c_j|s,a))}  \quad \text{with} \quad f:Z \to \mathcal{C}^+,
\end{eqnarray}
that rewards skills for generating trajectories that have a high chance of being predicted as relevant. To avoid skills collapsing into relevant semantics that are easier to learn, we propose to divide the skill space $Z$ equally among the $|\mathcal{C}^+|$ semantics by replacing the last $|\mathcal{C}^+|$ dimension of the $z$ latent variable with a one-hot vector that specifies a relevant semantic. Here, $f(c|z)$, is a simple mapping function from $Z$ to $\mathcal{C}^+$ that reads the one-hot segment of to determine the relevant class associated. We provide more details in Appendix~\ref{subsec: apdx|implementation_details}.

\subsection{Semantically Relevant Skill Discovery (SRSD)}
\label{subsec: method|SRSD}
In this section, we present Semantically Relevant Skill Discovery (SRSD), our overall method that learns diverse skills that belong to relevant semantics by maximising the Equation~\ref{eq: srsd_objective}. 
First, SRSD ensures that skills explore potential relevant semantics by maximising skill diversity while exploring with $r^{\text{exp}}$ and $r^{\text{div}}$ introduced in Section~\ref{subsubsec: CSD|CRL}. Then, we elicit semantic labels from human feedback, using which we learn a semantic predictor. We then use this semantic predictor with $r_{relevance}$ to additionally reward skills to be more semantically relevant. In other words, we train \begin{math}\pi_\theta(a|s,z)\end{math} to maximise the linear combination of $r^{\text{exp}},r^{\text{div}}, r^{\text{rel}}$ such that:
\begin{equation}
\begin{aligned}
\label{eq: RDSD}
   r^{\text{SRSD}}_t &=  r^{\text{exp}}_t +  r^{\text{div}}_t + w_t \cdot  r^{\text{rel}}_t  \quad\text{with}\quad   w^r_t  =
    \begin{cases}
      0& \text{if \begin{math} t \leq t_{0} \end{math}} \\
      \frac{\zeta}{|B|} & \text{otherwise}
    \end{cases}   
\end{aligned}
\end{equation}

Here, $w_t $ is a dynamic weight initialised at $0$ and increases as we collect feedback from humans until the number of elements in dataset $\zeta$ reaches the budget $B$ as follow:

\subsubsection{Distributional Critic.} SRSD linearly combines three learned reward functions which result in a noisy and more complex reward distribution, making approximating the value function challenging. Thus we propose to adopt a distributional representation of critics~\citep{bellemare2017distributional} which allows for learning the aleatoric uncertainty of our reward functions. This is done by incorporating Truncated Quantile Critics (TQC)~\citep{kuznetsov2020controlling} on top of TD3~\citep{pmlr-v80-fujimoto18a} as a backbone reinforcement learning algorithm. TQC integrates the distributional perspective by decomposing the expected return into the atoms of a distributional representation. Additionally, it controls overestimation by truncating the right tail of the reward distribution.

%% file: sections/5_experiments.tex
\section{Experiments}
\label{sec: experiments}
Our experiments aim to address the following questions: (Q1) How does SRSD's performance across various pre-trainings compare to previous methods? (Q2) Is semantic feedback more effective than preference feedback at handling a large range of semantics? 
\begin{figure*}[htbp]
    \centering
    \begin{subfigure}[b]{0.30\textwidth}
        \centering
        \includegraphics[width=1.0\textwidth]{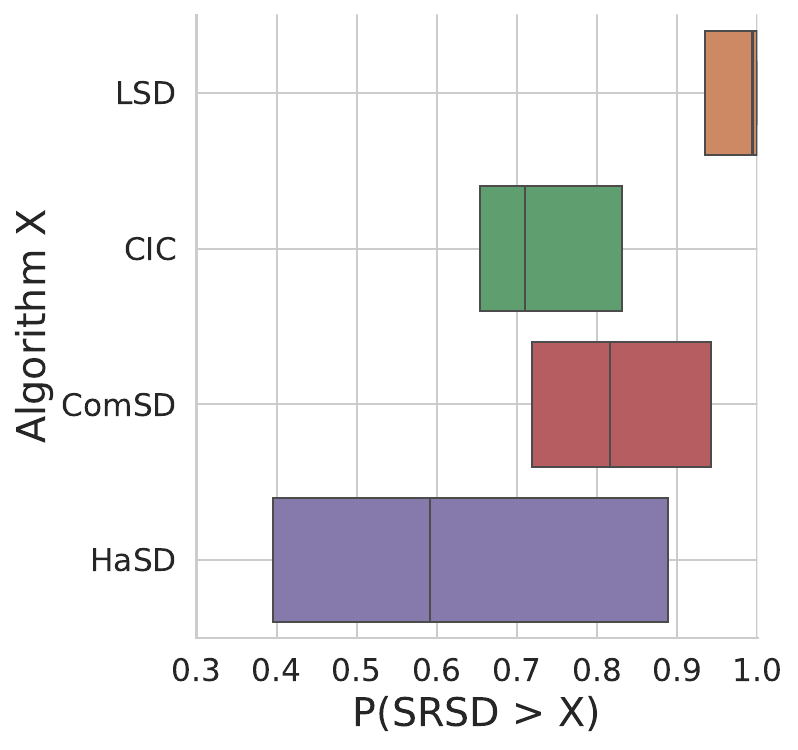} 
        \caption{Zero-shot}
         \label{fig: downstream_task|zero_shot|general}
    \end{subfigure}
    \hfill
    \begin{subfigure}[b]{0.30\textwidth}
        \centering
        \includegraphics[width=1.0\textwidth]{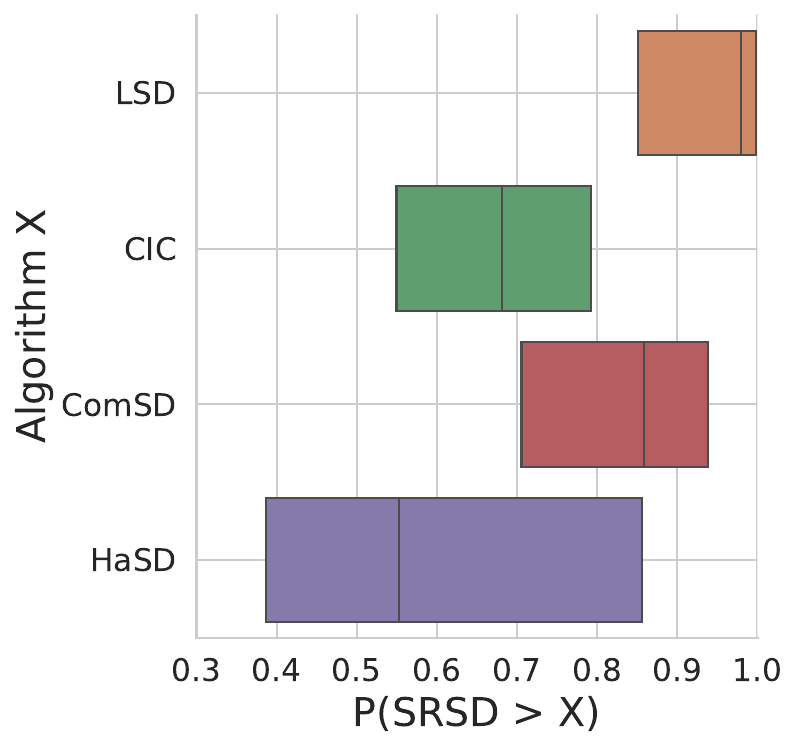} 
        \caption{Few-shot}
        \label{fig: downstream_task|few_shot|general}
    \end{subfigure}
    \hfill
    \begin{subfigure}[b]{0.30\textwidth}
        \centering
        \includegraphics[width=1.0\textwidth]{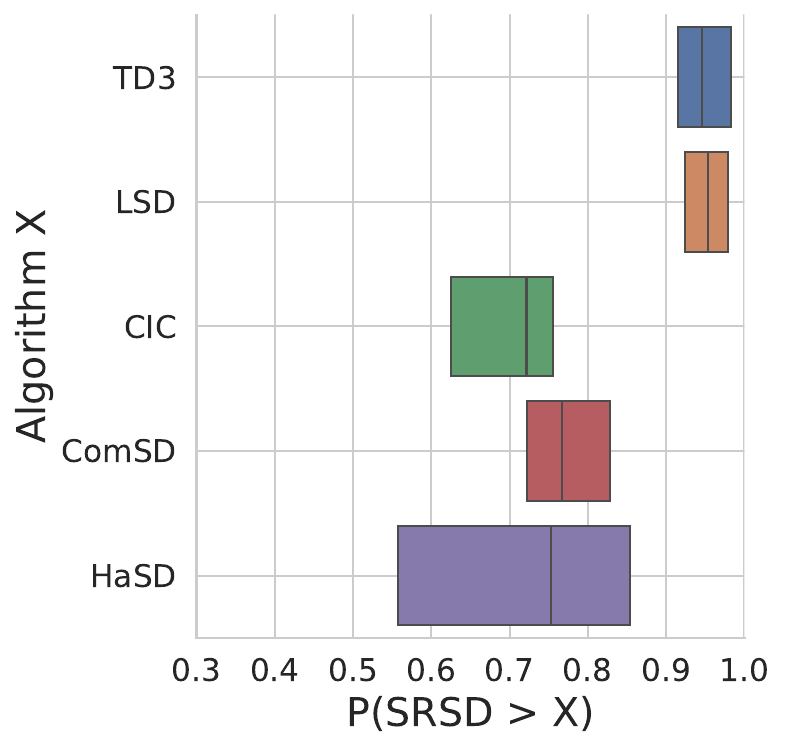} 
        \caption{Fine-tuning}
         \label{fig: downstream_task|fine_tuning|general}
    \end{subfigure}
    \caption{Average SRSD's probability of improvement over baselines on downstream tasks on URLB for Zero-shot(a), Few-short(b) and Fine-tuning evaluation(c). Probability of improvement shows how likely it is for SRSD to outperform algorithm X on a randomly selected task~\cite{agarwal2021deep}. SRSD provides consistency performance gains over all baselines.}
    \label{fig: Experiment|dmc|results|general}
\end{figure*}

\subsection{Experiment Setup}
\label{subsec: experiment|simulating_human_feedback}
\subsubsection{Baselines.} We compare SRSD with the following baselines: TD3~\citep{pmlr-v80-fujimoto18a}, LSD~\citep{park2021lipschitz}, CIC~\citep{laskincontrastive}, ComSD~\citep{liu2023comsd}, and HaSD~\citep{hussonnois2025human}. TD3 serves as the backbone RL algorithm for all methods, providing a baseline without pre-training. LSD, CIC and ComSD acts as the baselines for unsupervised skill discovery without human feedback. HaSD, which uses human preferences, serves as the baseline for human-guided skill discovery. To ensure fairness, we integrated each method with TQC, and extended HaSD to handle multiple preference modes.

\subsubsection{Simulating Human Feedback.} To facilitate experimentation and reproducibility, we simulated human feedback as previous works~\citep{2021pebble}. Human feedback is modelled by computing the probability of each relevant semantic being labelled independently. We select the semantic with the highest probability and use it to stochastically choose between labelling the semantic as relevant/ irrelevant. We provide more details in Appendix~\ref{subsec: apdx|simulating_human_feedback} along with evaluations using simulated human irrationality.

\subsubsection{Environments.} We evaluated our approach across 4 domains from URLB~\cite{laskin2021urlb} and DeepMind Control Suite (DMC)~\cite{tassa2018deepmind} (Walker, Quadruped, Cheetah, Hopper) . Each domain comports 4 downstream tasks to evaluate and fine-tune our skills. In these environments, we define relevant skills as downstream tasks and use downstream task rewards as ground-truth rewards. Additionally, we use a 2D point agent navigation environment in a two-dimensional circular room. In this environment, relevant semantics refers to travelling to one of the desired sectors of the environment. Ground-truth rewards are binary values that reward if the agent is in the desired sector.

\subsubsection{Evaluation.} Following prior work, we pre-trained CoMSD, HaSD, and SRSD for 1M (Nav2d) and 2M (DMC) steps. All methods were initialised with 5 different random seeds. HaSD and SRSD used 1400 feedback instances. In DMC domains, we assess performance by fine-tuning the pre-trained skills of each method on downstream tasks as per URLB~\cite{laskin2021urlb}. In this procedure, 100K environment steps are considered, and during the first 4K steps, we randomly sample skills and select the best-performing ones, which are fine-tuned with extrinsic rewards for 96K steps. For HaSD and SRSD, we sampled only semantically relevant skills. After fine-tuning we evaluated each seed for each method 10 times resulting in 50 scores for each task. We also assessed zero-shot (by evaluating a randomly sampled skill) and few-shot (by evaluating the best-performing skill before being fine-tuned) performances. %capabilities: zero-shot , and few-shot .
\begin{figure*}[htbp]
    \centering
    \includegraphics[width=1.0\textwidth]{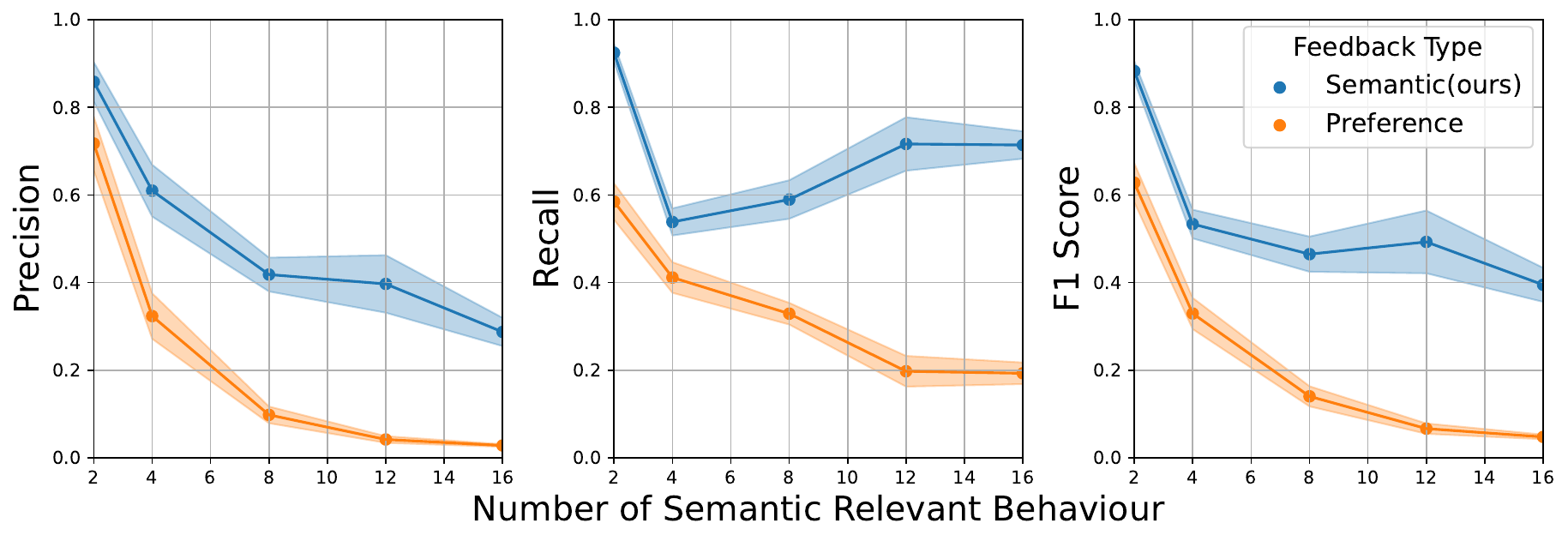} 
    \caption{Impact of semantic and preference feedback on skills learned when handling numerous semantic (2,4,8,12,16) in a 2D navigation environment. Precision measures how many skills are semantically relevant, while recall measures how much of the relevant semantics are covered by skills. This shows that compared to semantic-labels, preference feedback is less efficient in settings with multiple semantics, resulting in poor performance.}
    \label{fig: experiment|nav2d|coverage_metrics}
\end{figure*}

\subsubsection{(Q1) Performance on DMC Suite.} To assess the performance gains brought by SRSD over baseline methods, we report, in Figure~\ref{fig: Experiment|dmc|results|general}, the average probability of improvement as presented in~\citep{agarwal2021deep} across fine-tuning, zero-shot, and few-shot evaluations, respectively. Our analysis reveals that SRSD generally achieves a statistically significant probability of improvement over all baselines (i.e., the probability of improvement exceeds 0.5 and the associated confidence intervals do not overlap 0.5). The main exception arises in comparisons with HaSD on zero-shot and few-shot evaluations, where the increased stochasticity inherent to these scenarios reduces the margin of superiority. Nevertheless, SRSD still demonstrates a mean probability of improvement above 0.5 in these cases, indicating consistent performance gains. We provide a granular perspective with detailed score distributions for all agents and tasks in Appendix~\ref{fig: downstream_task|fine_tuning|tasks} showing SRSD's superior performance on the more challenging tasks. Generally, both SRSD and HaSD achieve high overall performance due to their ability to systematically reduce the skill search space and target semantically relevant behaviours during sampling. Finally, Figure~\ref{fig: experiments|dmc|qualitative} presents qualitative examples of diverse skills discovered by the SRSD framework.

%Additional breakdowns of zero-shot and few-shot performances are provided in Appendix~\ref{subsubsec: appendix|dmc|performances}. 

\subsubsection{(Q2) Preference feedback vs Semantic feedback.} We analyse the impact of semantic and preference feedback on skills learned when handling numerous relevant semantics (2,4,8,12,16) in the 2D navigation environment. To this end, we sampled 1000 skills after pre-training and evaluated the coverage of each semantic by its associated semantic-conditioned skills. In Figure~\ref{fig: experiment|nav2d|coverage_metrics} we compared both feedback types against precision, recall and F1-score metrics. Precision measures how many skills are semantically relevant, while recall measures how much of the relevant semantics are covered by skills. This shows that compared to semantic-labels, preference feedback is less efficient in settings with multiple semantics, resulting in poor performance. Qualitative visualisations of skills for all environments and methods are reported in Appendix~\ref{subsec: appendix|qualitative_results_nav2d}.

\subsubsection{Other Experiments.} We also investigated the following aspects, which are covered in Appendix~\ref{subsec: appendix|additional_results}. Feedback sensitivity: SRSD remains largely robust to the feedback budget; while zero-shot performance decreases as labels drop (40, 100, 400, 1400 instances), it still outperforms ComSD even with only 40 labels.  Active sampling: our strategy allocates the majority of labels to relevant semantics (>50\% in all domains; up to 79\% in Walker), which reduces oversampling of irrelevant semantics, improves fairness across relevant classes (via Jain’s index), and yields measurable performance gains. Distributional perspective: integrating a TQC-based distributional critic delivers statistically significant improvements by managing aleatoric uncertainty and mitigating value overestimation. Reward ablations: all components contribute positively (probability of improvement > 0.5) where $r^\text{div}$ and $r^\text{rel}$ show consistent, significant gains. Real human-in-the-loop: Authors familiar with the task provided 400 feedback signals across 12 query sessions (1.5 minutes each; $\approx$20 minutes total) to learn skills that closely match the simulated-feedback setting.

%% file: sections/6_limitations.tex
%\section{Limitations}

%% file: sections/6_conclusion.tex
\section{Discussion and Limitation}
We introduce Semantic Relevant Skills Discovery (SRSD), a framework using semantic feedback to improve diversity in the type of skills discovered. In four locomotion environments, SRSD effectively discovers diverse and relevant skills. We provide theoretical and empirical evidence that semantic-labels better suit learning different types of skills compared to preference feedback. Future work will extend SRSD to scenarios where relevant semantics are not fixed but evolve dynamically, leading to settings that better mirror real-world skill discovery. A key limitation of our work is that we only learn skills that are semantically meaningful to humans. However, in the context of learning aligned behaviours, this is a desirable characteristic. In addition, while we designed our framework and experiments to closely simulate realistic humans with stochasticity and irrationality, incorporating actual human feedback may introduce additional challenges. We leave this aspect for future work.% Future work should directly incorporate and account for human inconsistencies during skill discovery.

%% file: sections/appendix.tex
\section{Appendix}
\input{appendix/additional_experiment}

\input{appendix/proof}

\input{appendix/implementation_details}
\input{appendix/metrics}
\input{figures/algo}
\input{figures/algo_sampling}
\input{appendix/human_feedback}

%% file: appendix/additional_experiment.tex
\subsection{Additional Experiments}
\label{subsec: appendix|additional_results}

In this section we provide additional results supporting Q1 and Q2 in Section~\ref{sec: experiments} and additional experiments to address the following questions:(Q3) How sensitive is SRSD to feedback budget? (Q4) Does active sampling ensure evenly distributed dataset of semantic labels? (Q5) Does incorporating a distributional perspective improve SRSD performance? (Q6) How does each reward term individually affect training performance? (Q7) How does SRSD perform with real Human-in-the-Loop Experiment?

\subsubsection{(Q1) How does SRSD's performance across various pre-trainings compare to previous methods?}
To provide a more granular perspective, we include detailed score distributions for all agents and tasks in Figure~\ref{fig: downstream_task|fine_tuning|tasks} for the fine-tuning evaluation, focusing on selected baselines. These plots illustrate that SRSD consistently outperforms all baselines, with particularly notable improvements on the more challenging tasks (shown to the right of the plots), such as Quadruped Run, Walker Run, Cheetah Run (forward and backward), and Hopper Hop (forward and backward). 

\begin{figure}[htbp]
    \centering
    \includegraphics[width=1.0\textwidth]{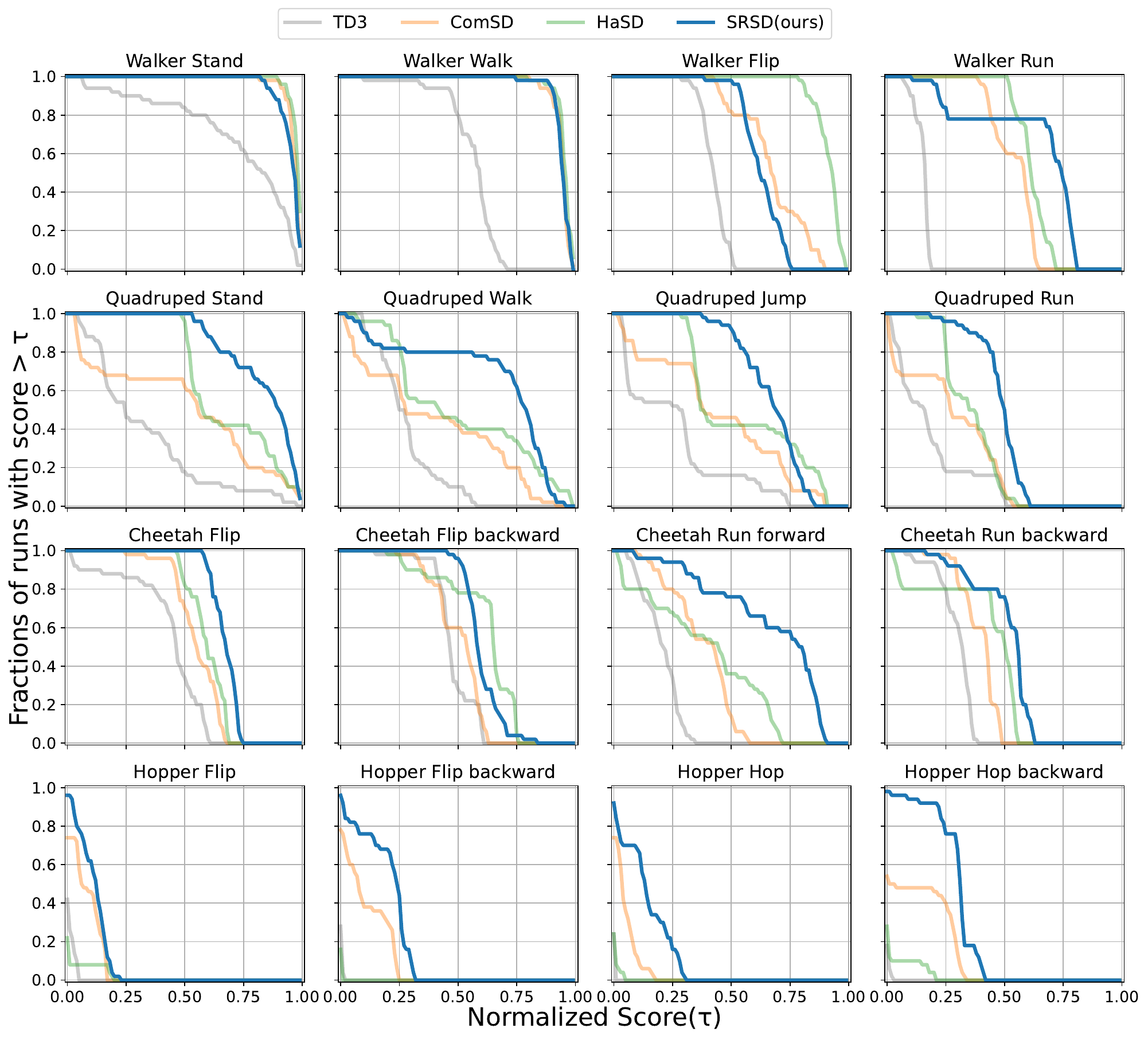} 
    \caption{Score distributions after fine-tuning skills over 12 downstream tasks on URLB. Score distributions show the fraction of runs above a certain score (higher curve is better) \cite{agarwal2021deep}. SRSD achieves competitive performances across all tasks and agents.}
    \label{fig: downstream_task|fine_tuning|tasks}
\end{figure}

\subsubsection{(Q2) Is semantic feedback more effective than preference feedback at handling a large range of semantics?}
\label{subsec: appendix|qualitative_results_nav2d}
In this section, we qualitatively compare skills learned from semantic labels with those learned from preference feedback. Figure~\ref{fig: appendix|qualitative|nav2d|example} shows the target semantic sectors for the 2D navigation environments with 2, 4, 8, 12, and 16 semantics. Figures~\ref{fig: appendix|qualitative|nav2d|hasd} and \ref{fig: appendix|qualitative|nav2d|srsd} display the skill sets learned by HaSD and SRSD, respectively. We can observe that HaSD’s ability to produce skills that cover all the semantic sectors degrades significantly starting at 4 semantics, whereas SRSD learns a skill set that covers all semantics evenly with more consistency.

\begin{figure}[htbp]
    \centering
    \includegraphics[width=1.0\textwidth]{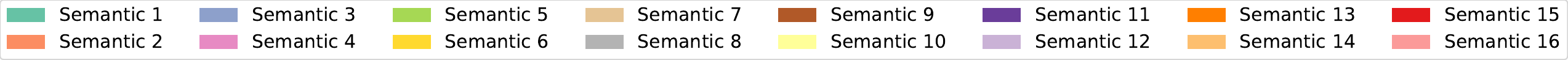}
    \begin{subfigure}[b]{1.0\textwidth}
        \centering
        \includegraphics[width=0.19\textwidth]{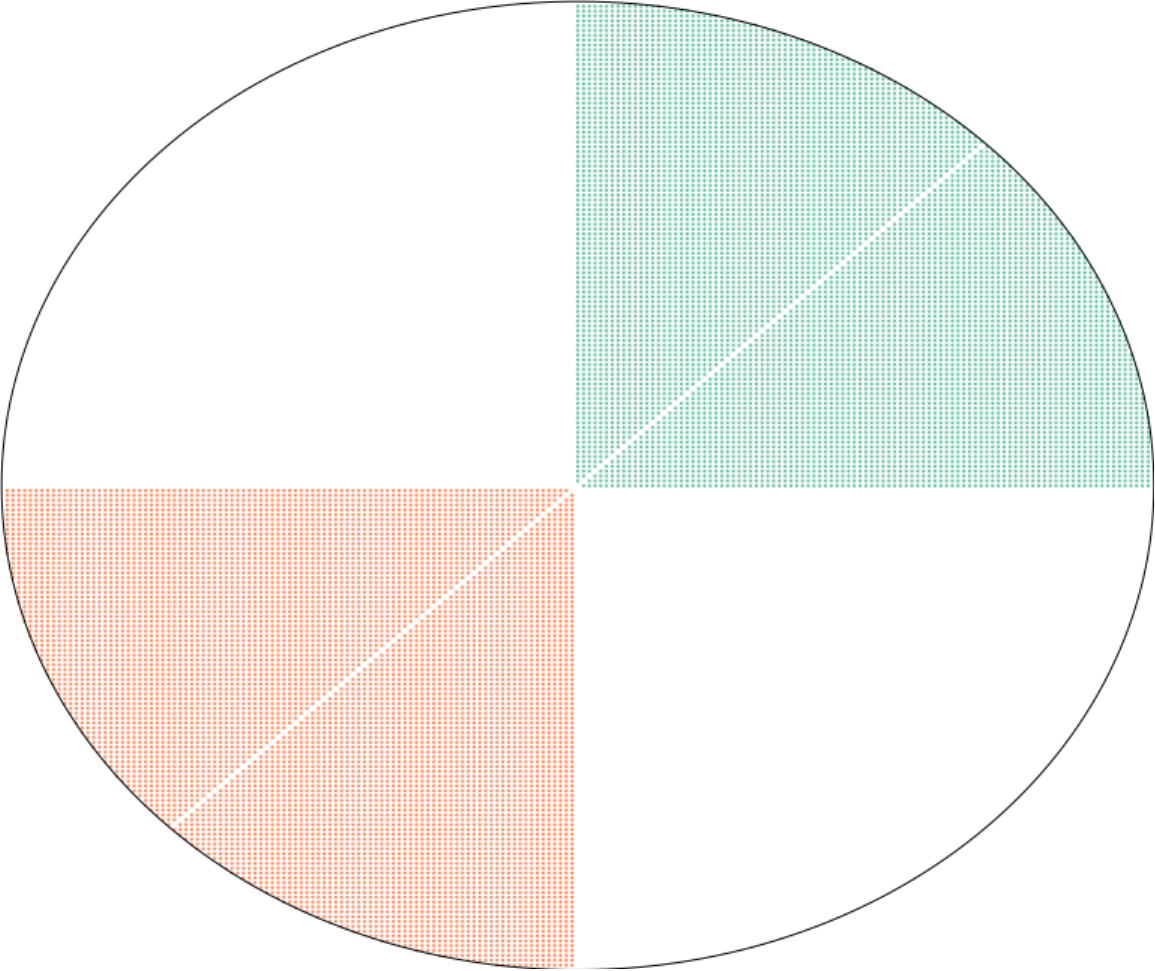} 
        \includegraphics[width=0.19\textwidth]{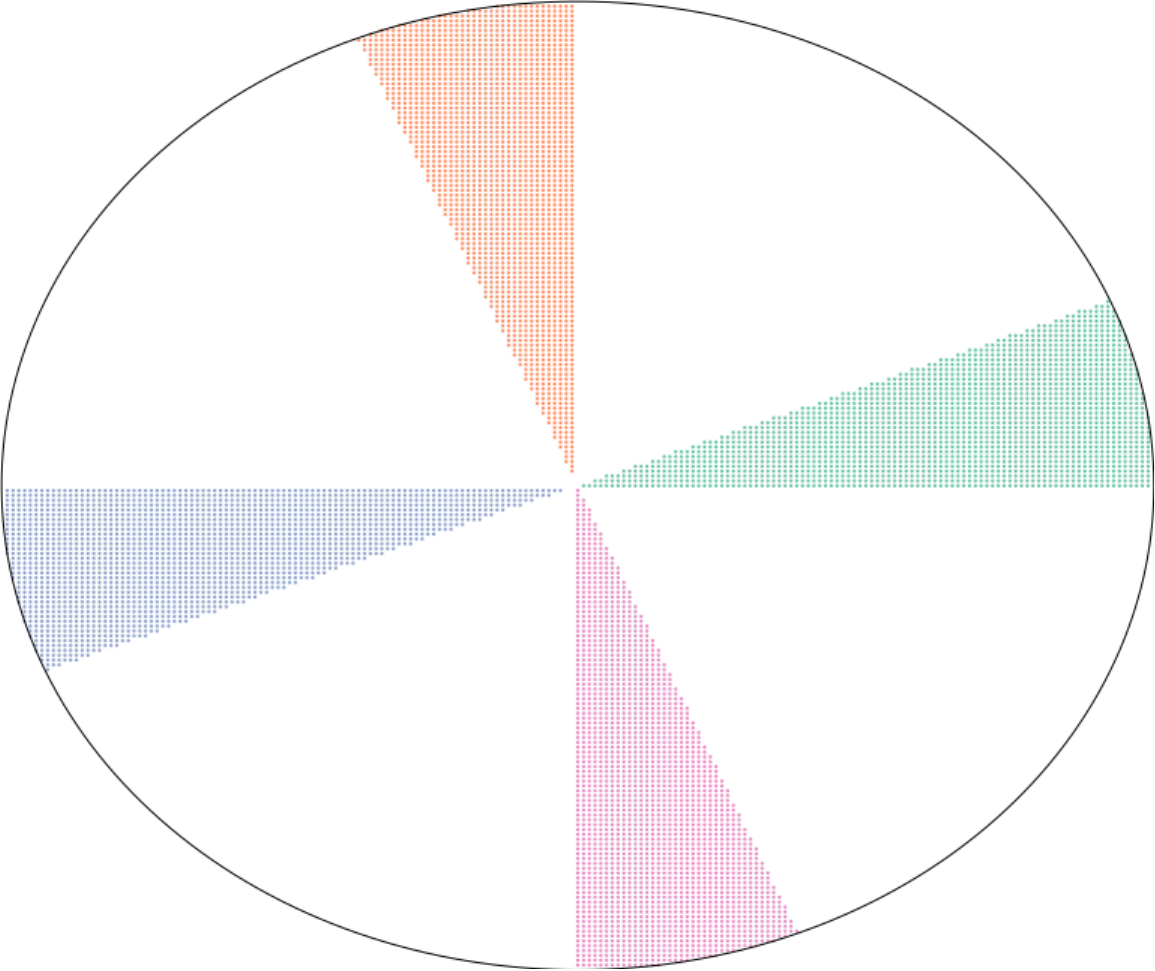} 
        \includegraphics[width=0.19\textwidth]{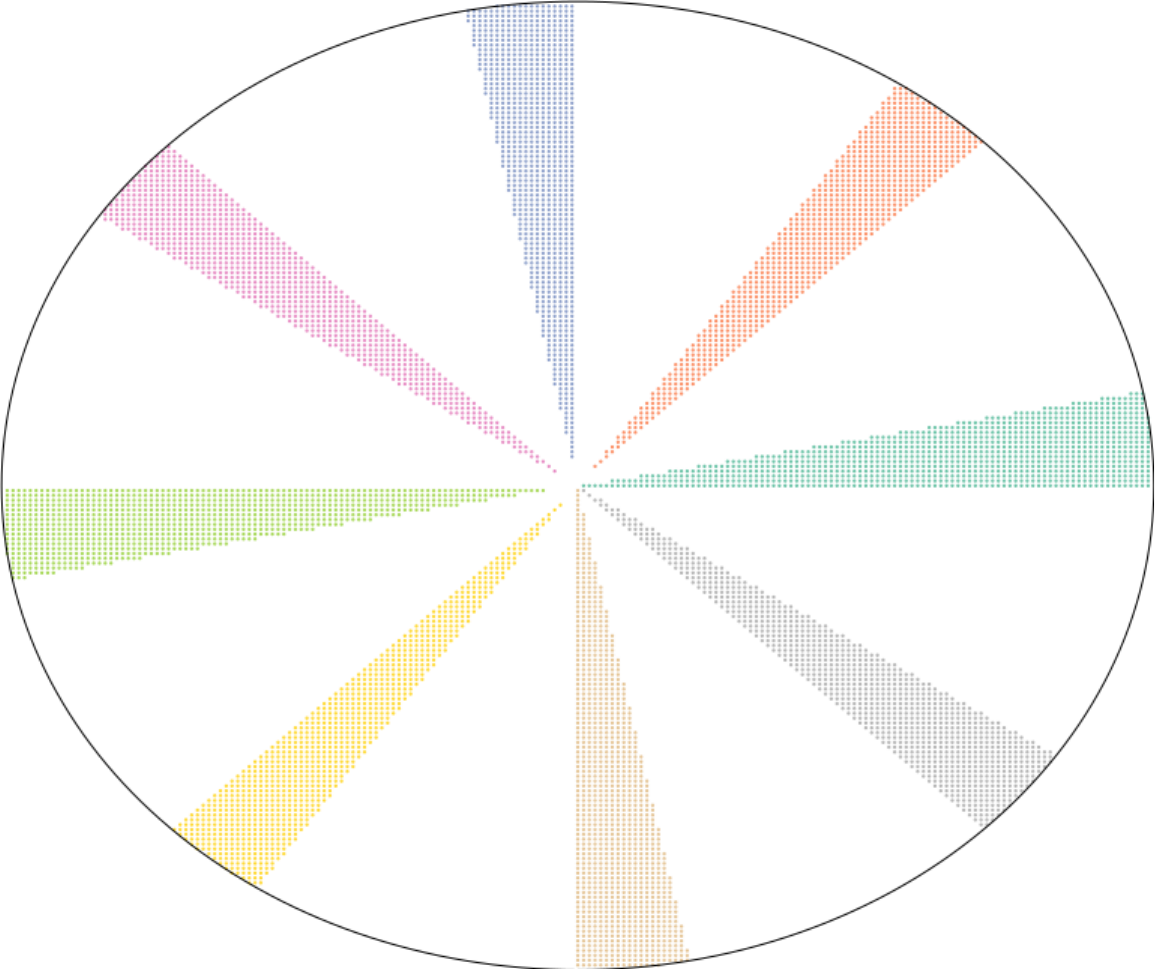}
        \includegraphics[width=0.19\textwidth]{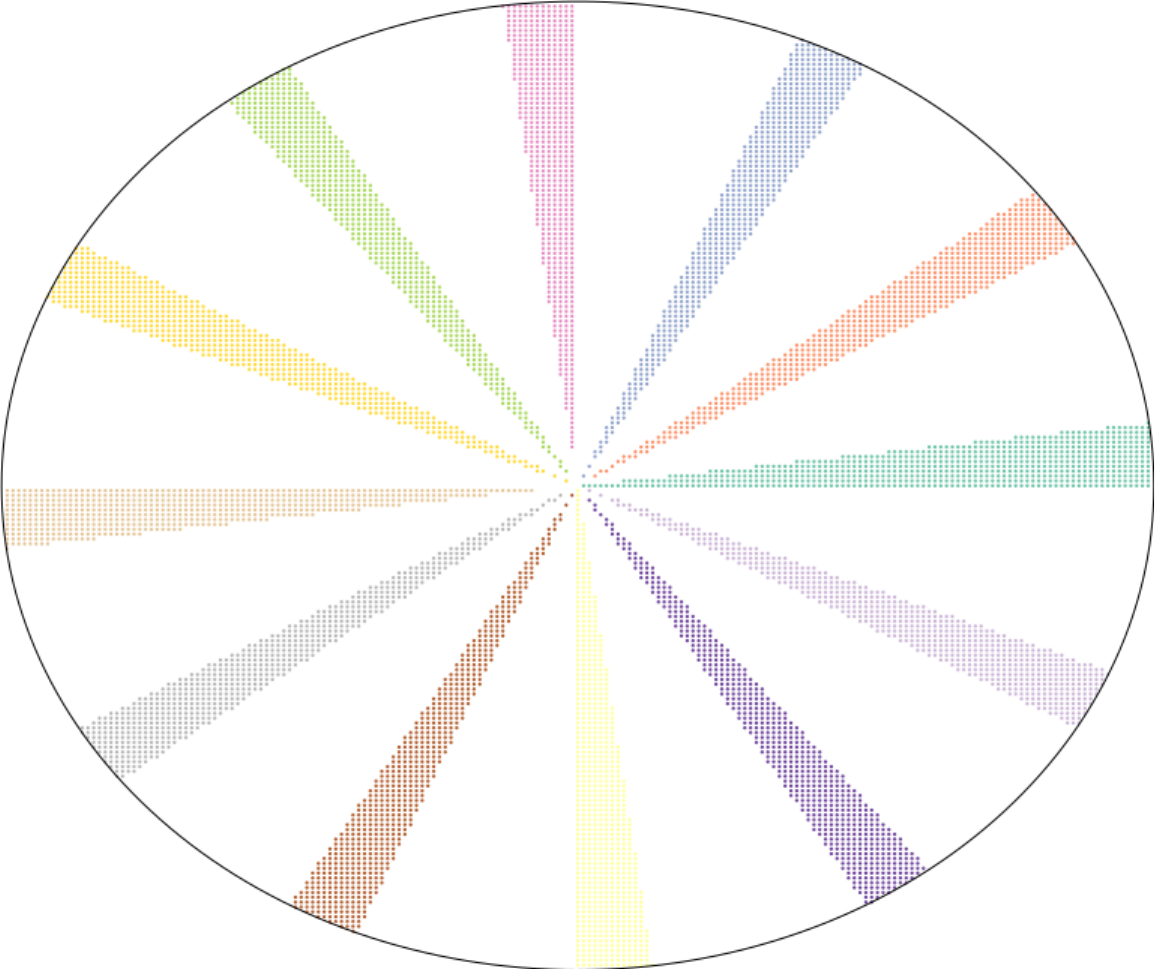} 
        \includegraphics[width=0.19\textwidth]{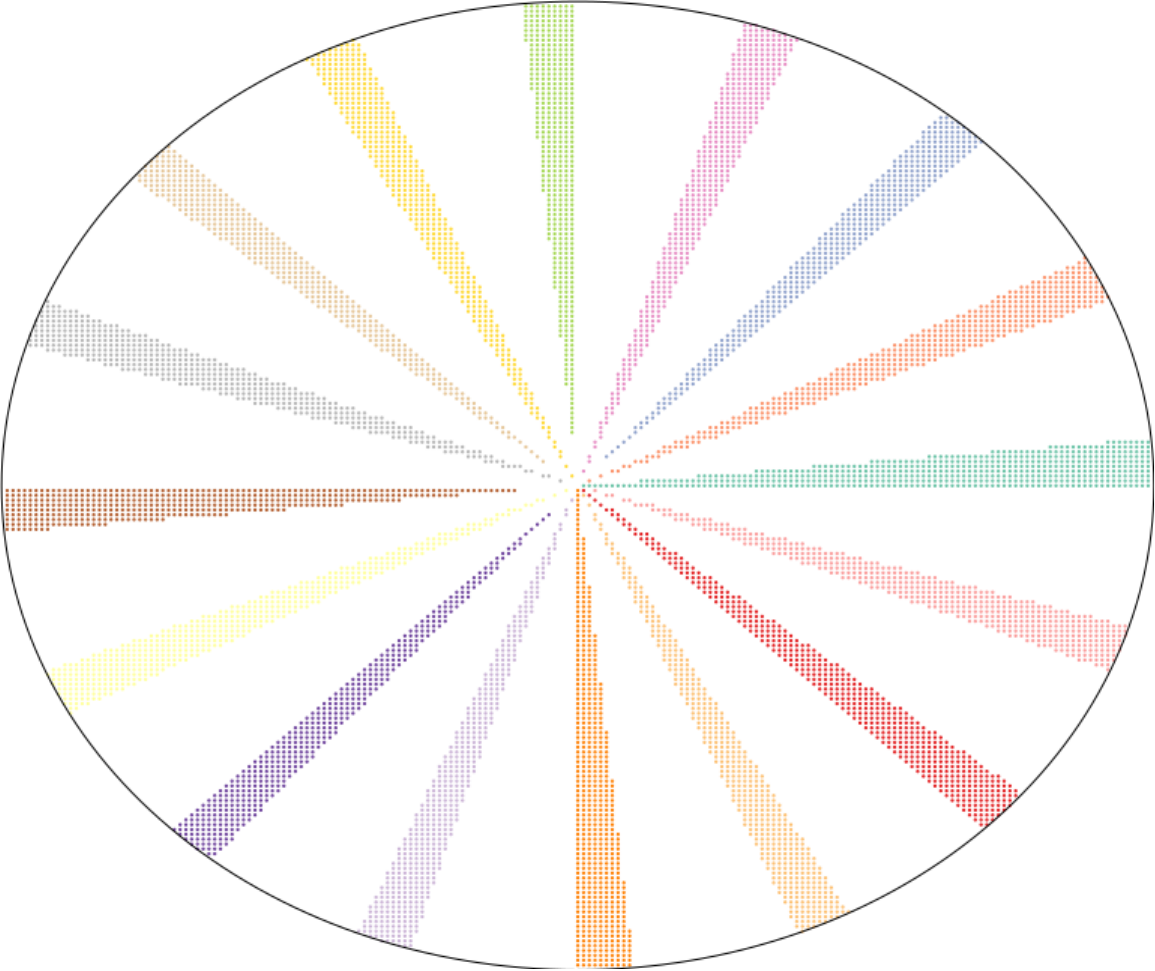} 
        \caption{Example}
        \label{fig: appendix|qualitative|nav2d|example}
    \end{subfigure}
    \begin{subfigure}[b]{1.0\textwidth}
        \centering
        \includegraphics[width=0.19\textwidth]{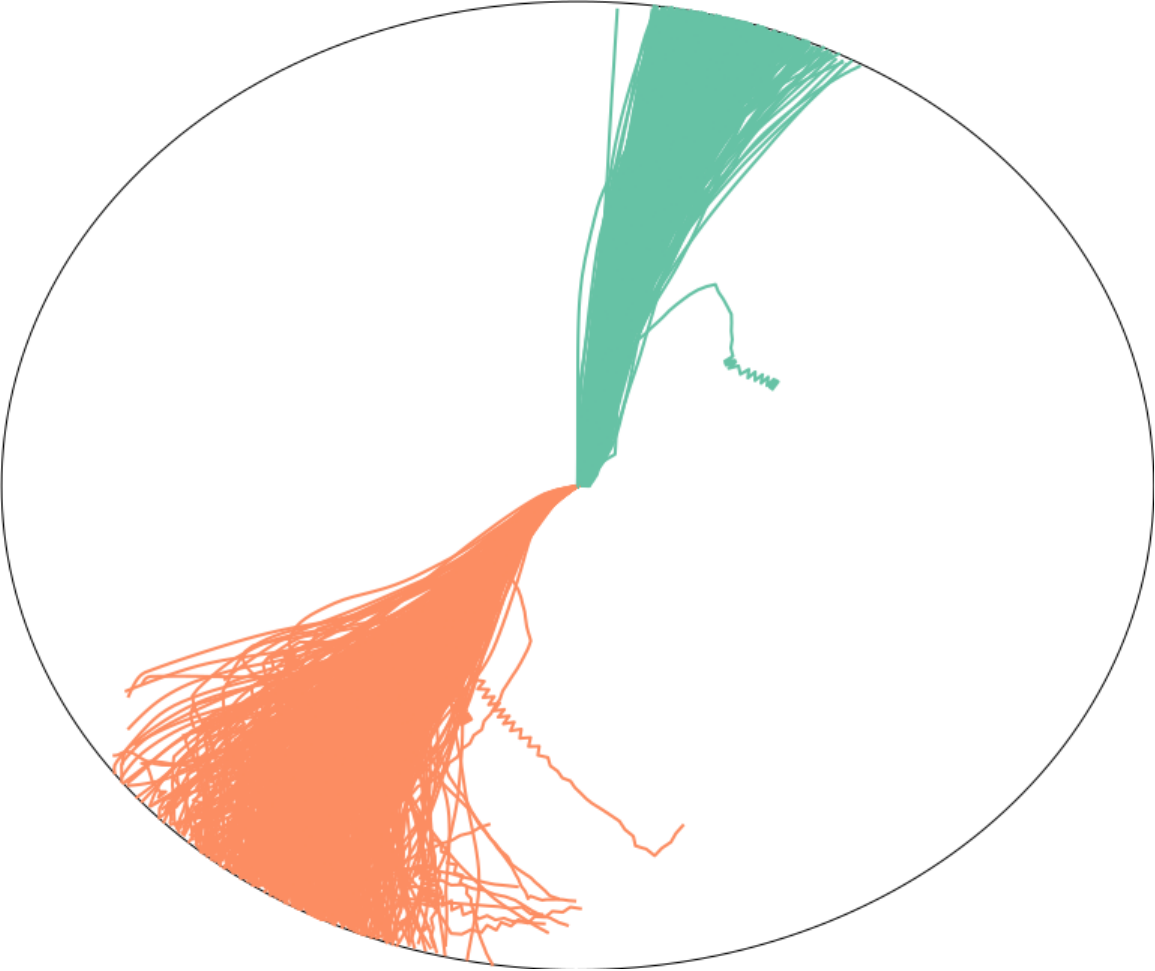} 
        \includegraphics[width=0.19\textwidth]{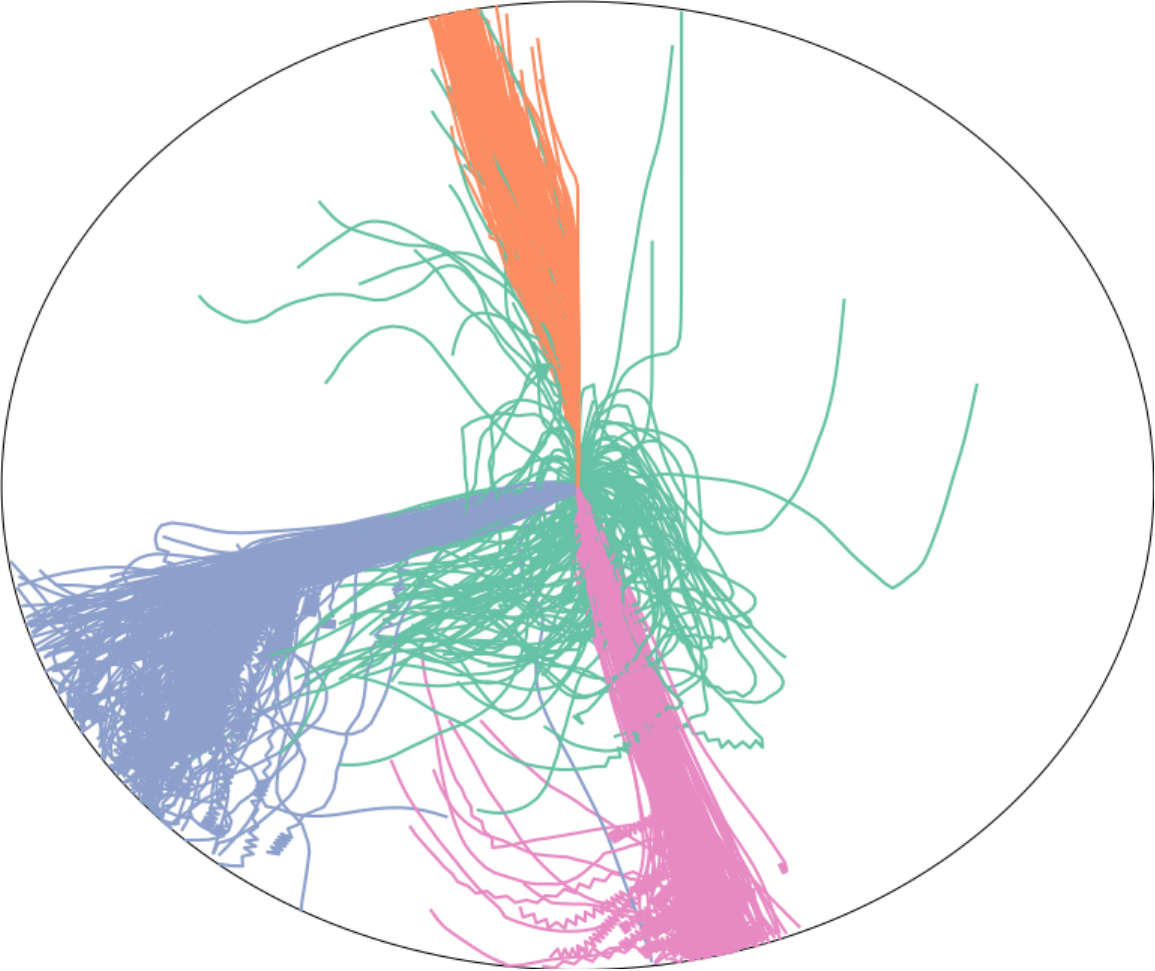} 
        \includegraphics[width=0.19\textwidth]{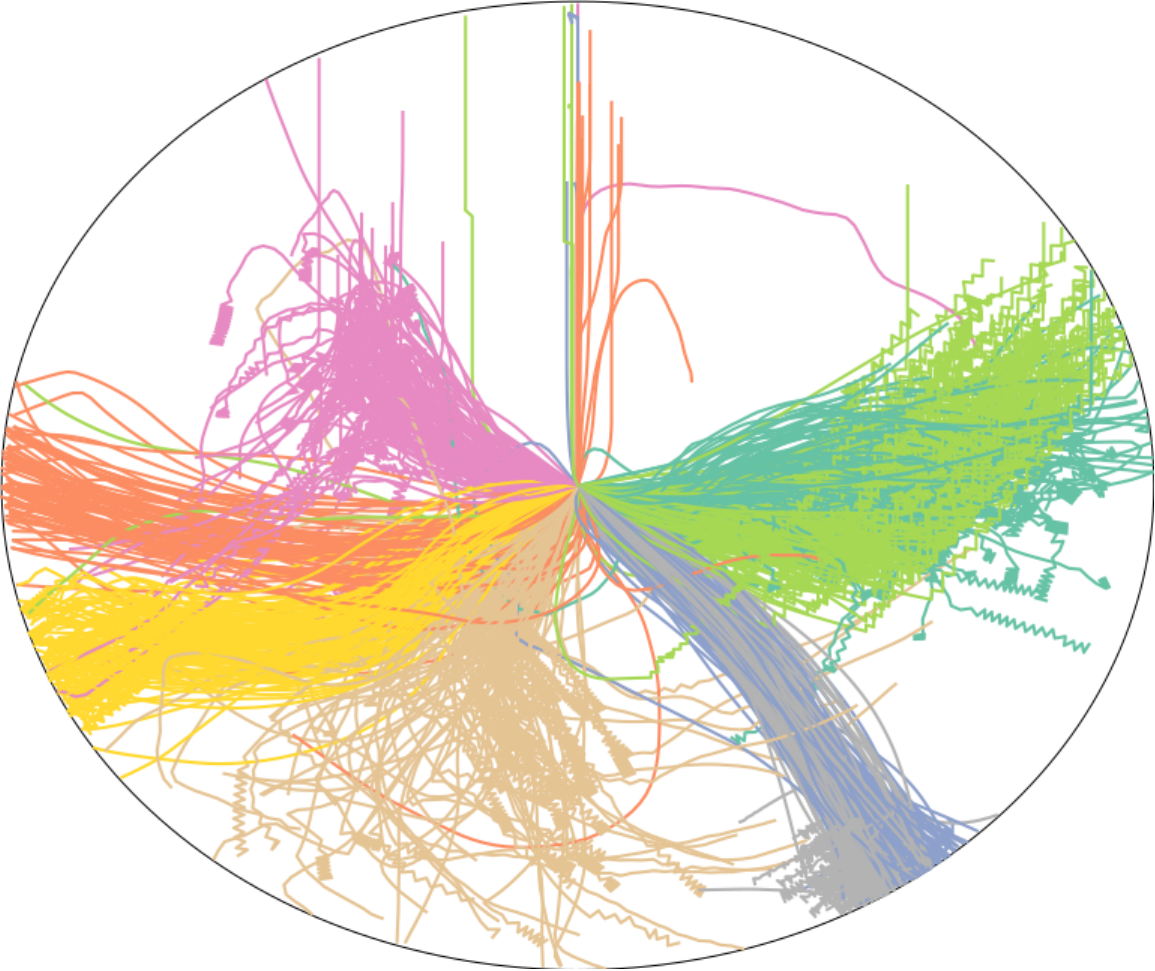}
        \includegraphics[width=0.19\textwidth]{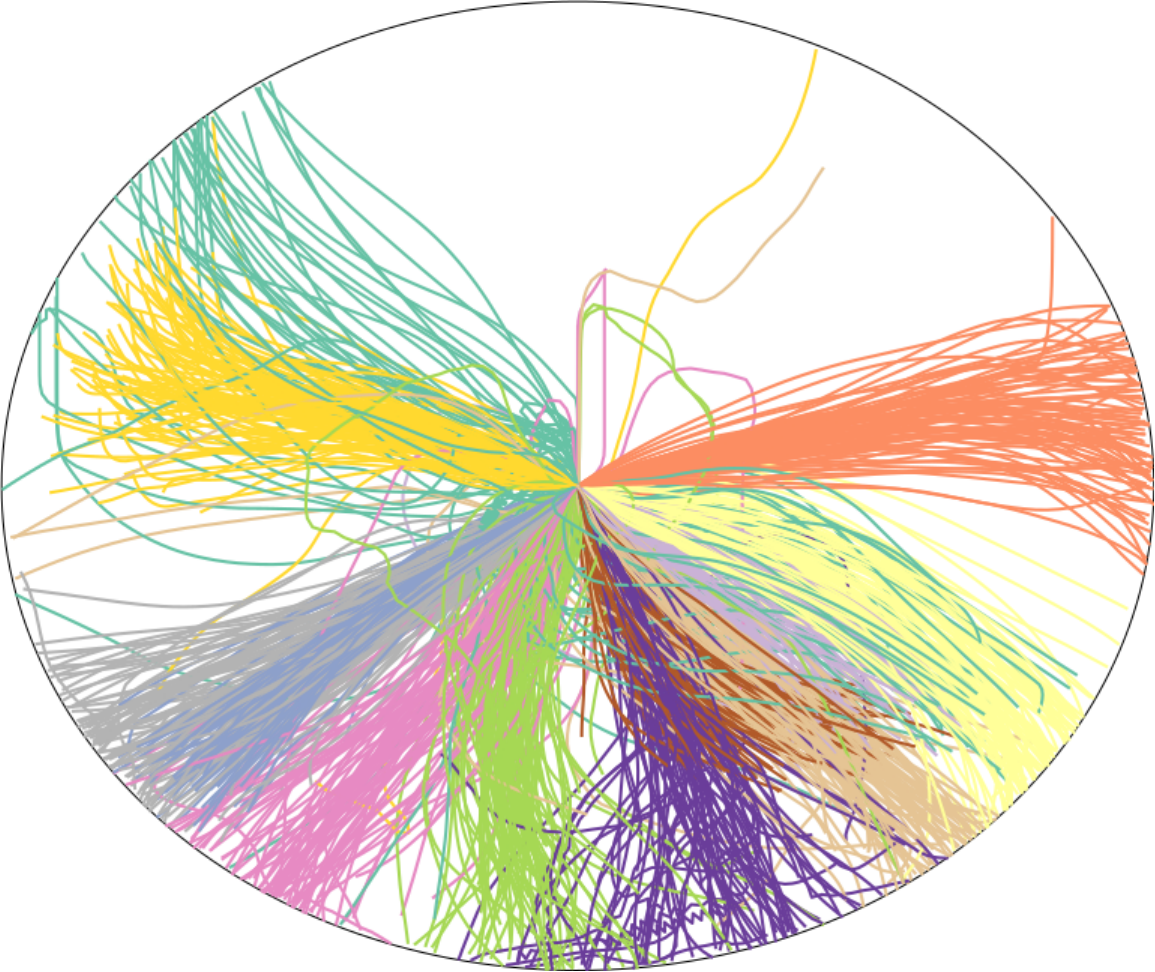} 
        \includegraphics[width=0.19\textwidth]{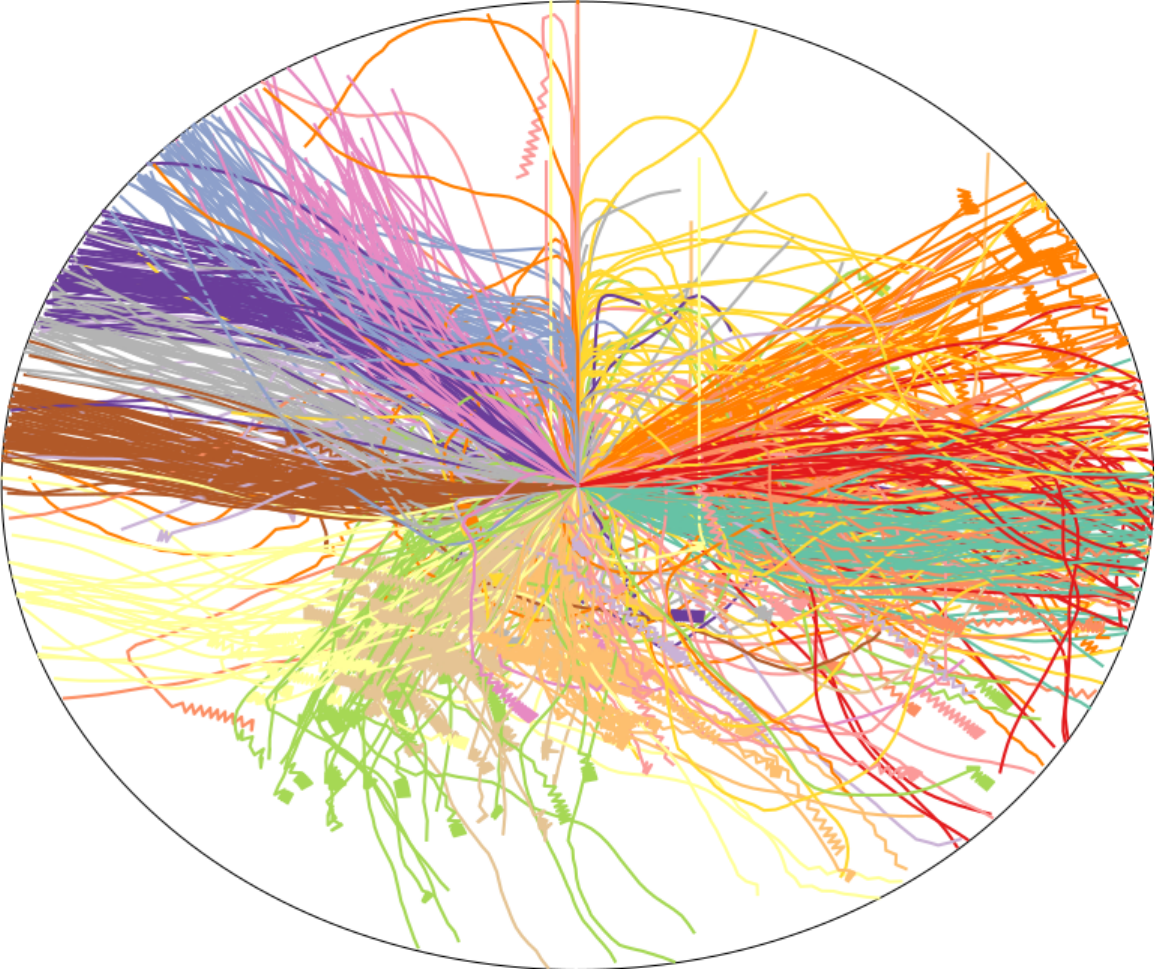}
        \caption{HaSD}
        \label{fig: appendix|qualitative|nav2d|hasd}
        
    \end{subfigure}
    \begin{subfigure}[b]{1.0\textwidth}
        \centering
        \includegraphics[width=0.19\textwidth]{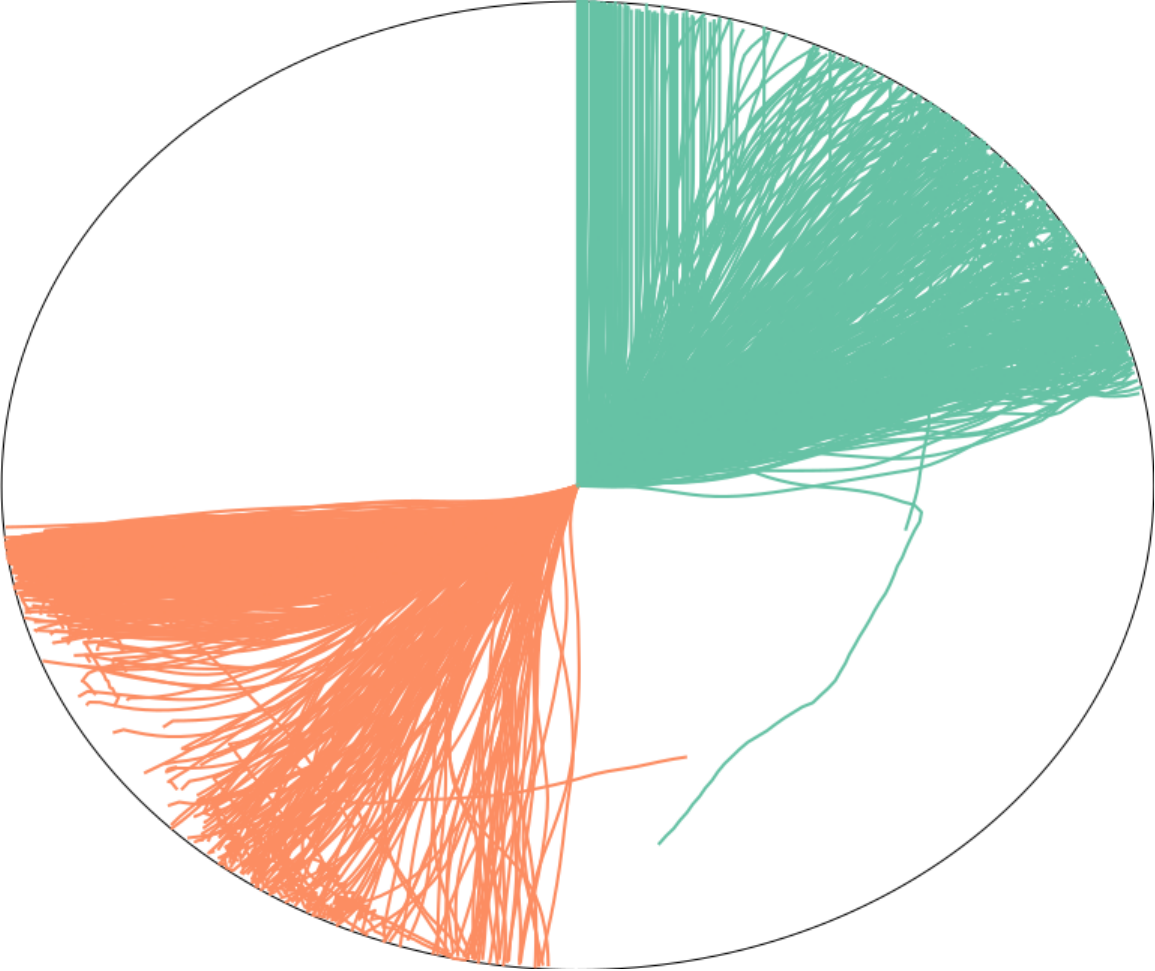} 
        \includegraphics[width=0.19\textwidth]{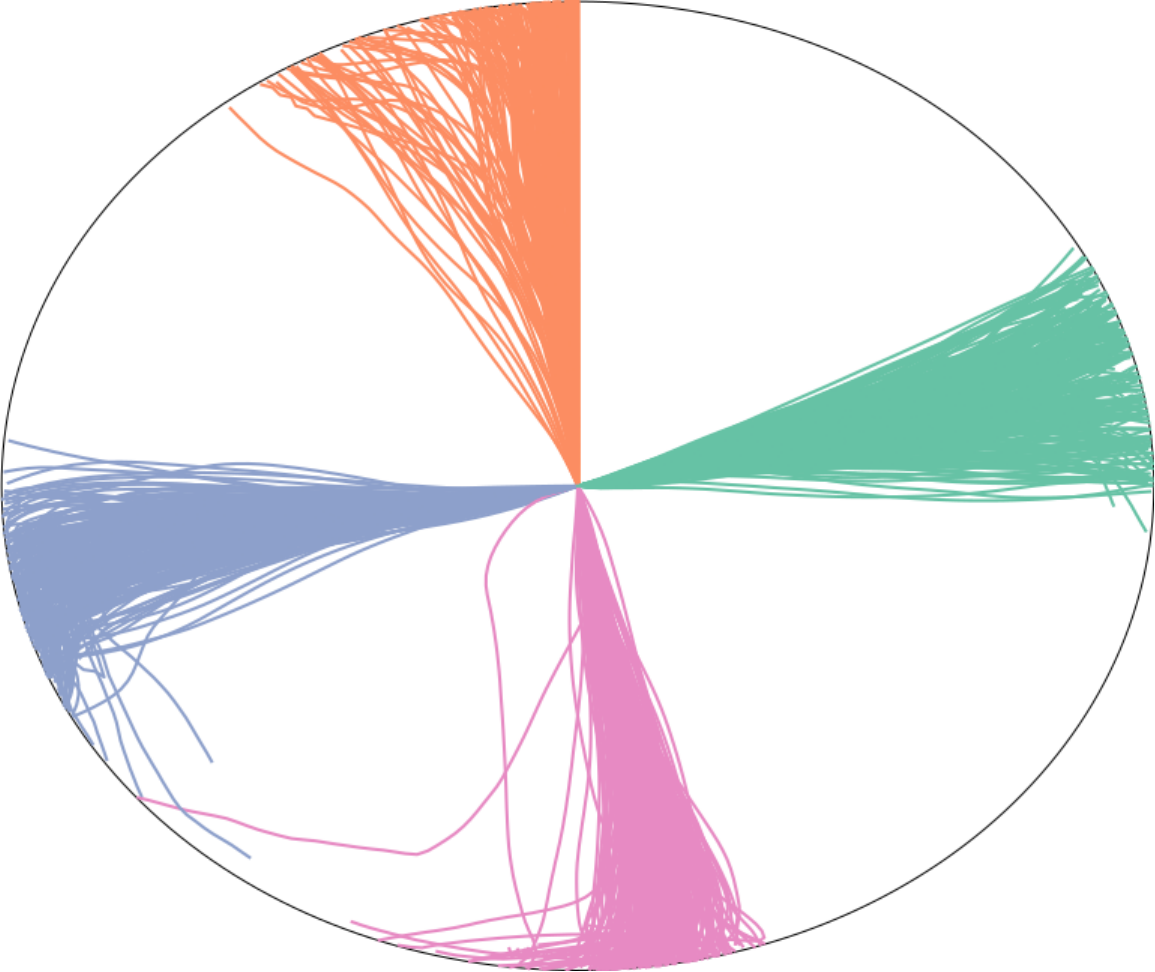} 
        \includegraphics[width=0.19\textwidth]{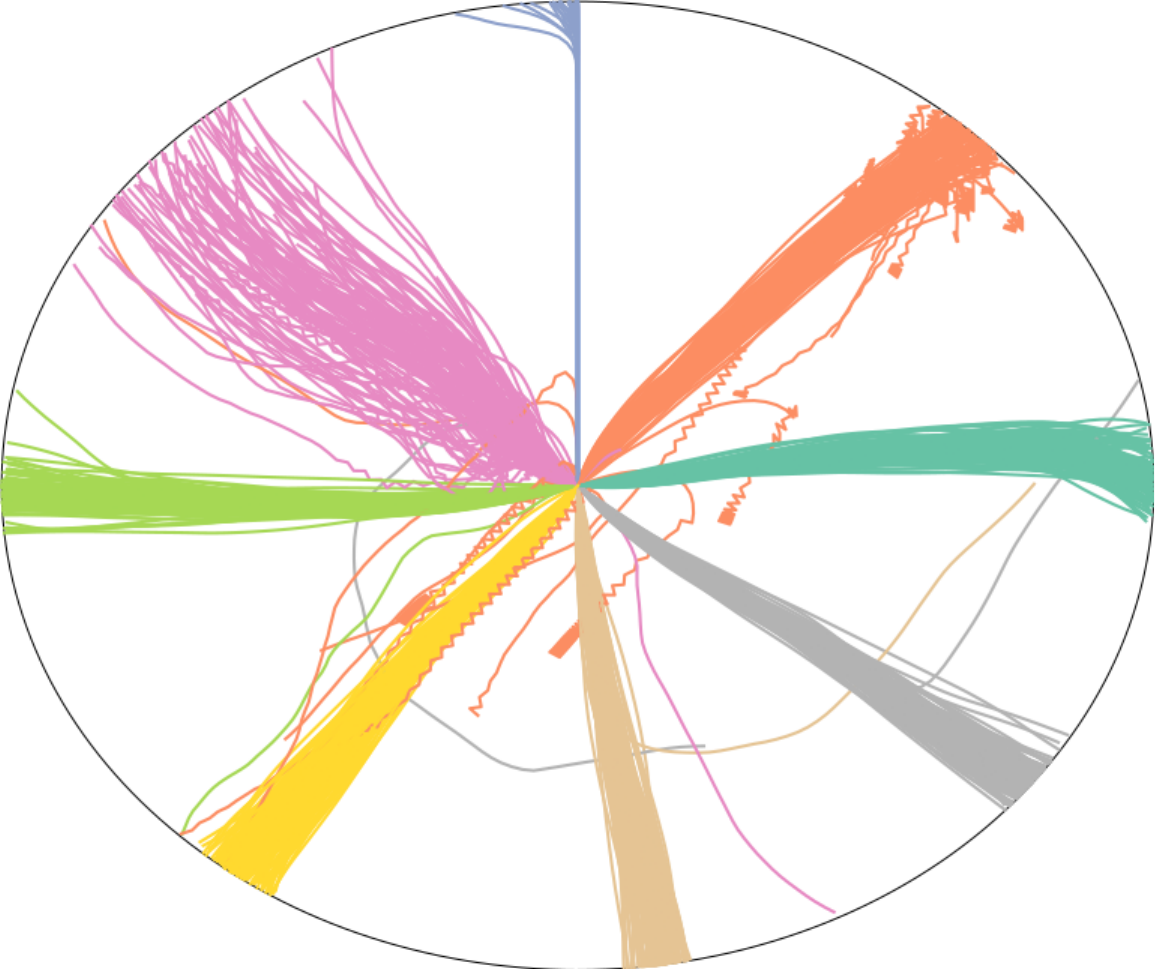}
        \includegraphics[width=0.19\textwidth]{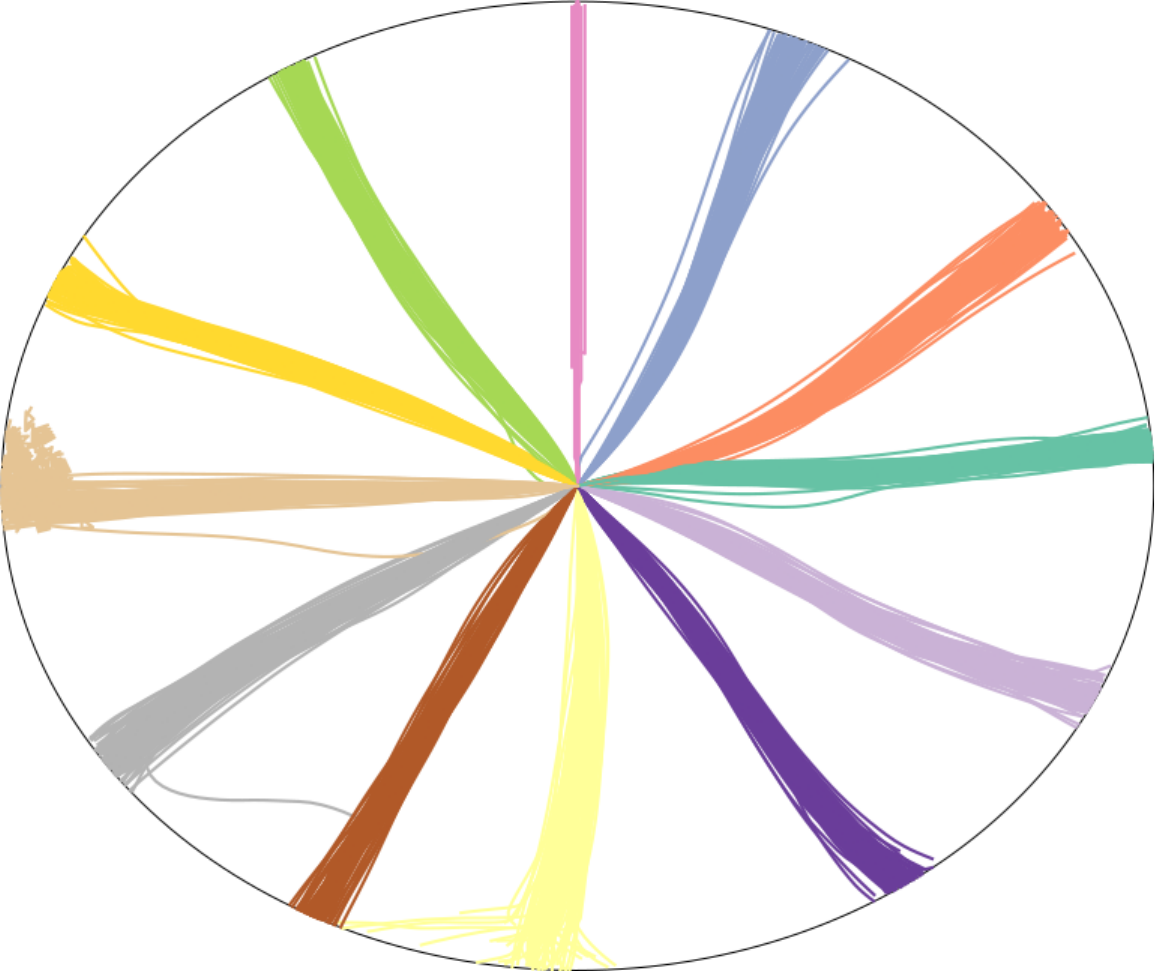} 
        \includegraphics[width=0.19\textwidth]{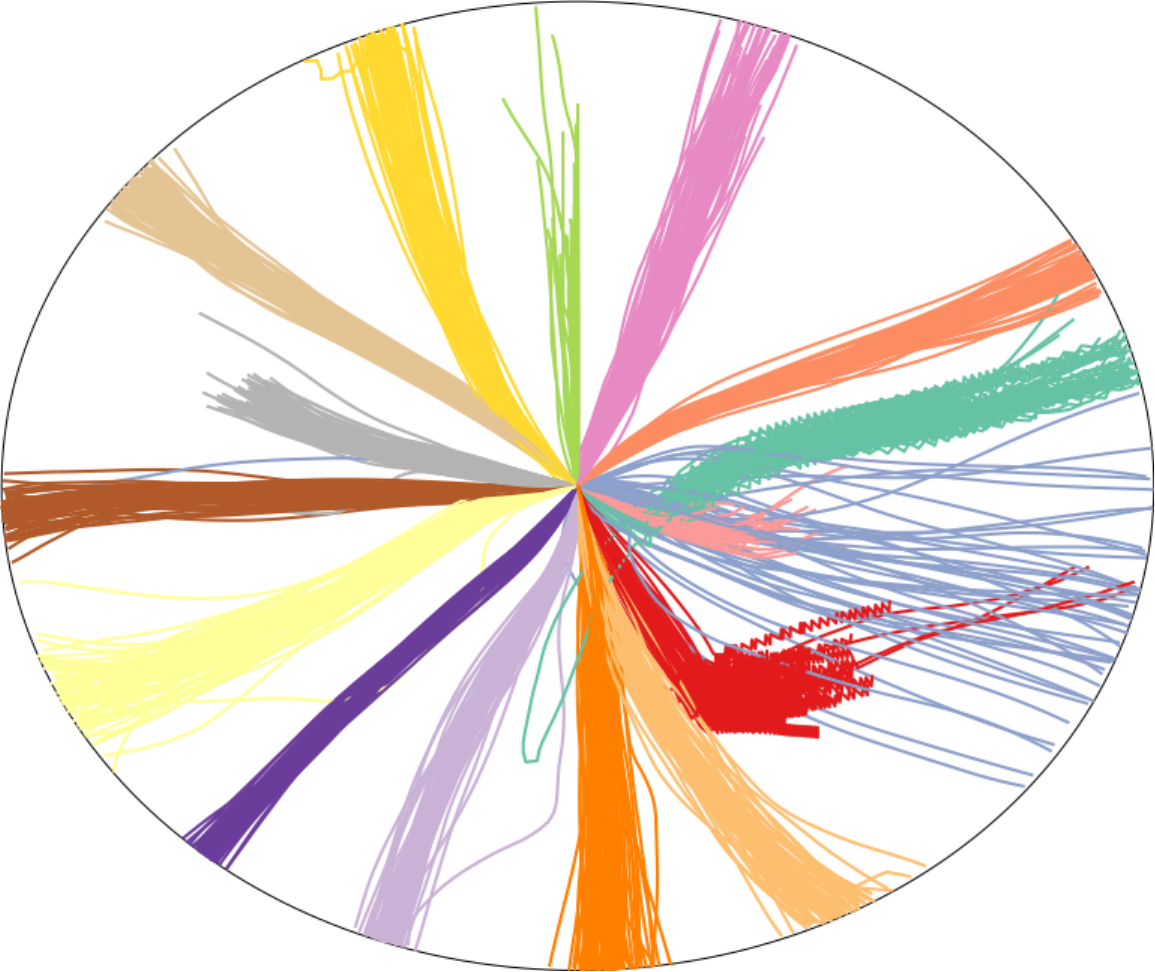} 
        \caption{SRSD}
        \label{fig: appendix|qualitative|nav2d|srsd}
    \end{subfigure}
    \begin{tikzpicture}
    \draw (-7.6,0) -- (2.15,0);
        \draw (-7.6,0.1) --++ (0,-0.2) node[below] {\strut 2};
        \draw (-5.1,0.1) --++ (0,-0.2) node[below] {\strut 4};
        \draw (-2.7,0.1) --++ (0,-0.2) node[below] {\strut 8};
        \draw (-0.27,0.1) --++ (0,-0.2) node[below] {\strut 12};
        \draw (2.15,0.1) --++ (0,-0.2) node[below] {\strut 16};
        \draw (-2.55,-0.5) node[anchor=north] {Number of Semantics};
     %\draw [thin,black,stealth-stealth] (-5,0) -- (5,0);
    \end{tikzpicture}
    \caption{(a) Examples of the desired sectors representing semantic behaviours for each 2D navigation environment with 2,4,8,12 and 16 semantic behaviours. 
    (b) Visualisation of skills learned with HaSD over each 2D navigation environments. (c) Visualisation of skills learned with SRSD over each 2D navigation environments}
    \label{fig: appendix|qualitative|nav2d}
\end{figure}

\subsubsection{(Q3) How sensitive is SRSD to feedback budget?}
\textbf{(Q3) Sensitivity to Human Feedback Budget.} We also analyse the sensitivity of SRSD performance to the human feedback budget. We evaluated SRSD with budgets of 40, 100, 400 and 1400 maximum feedback instances. We report in Figure~\ref{fig: downstream_task|zero_shot|general|feedbacks} the aggregated normalised score distribution against ComSD over zero-shot capabilities. While reducing feedback negatively impacts SRSD's zero-shot capability, it still performs better than ComSD even with as few as 40 feedback instances.  
\begin{figure}[htbp]
    \centering
    \includegraphics[width=0.75\textwidth]{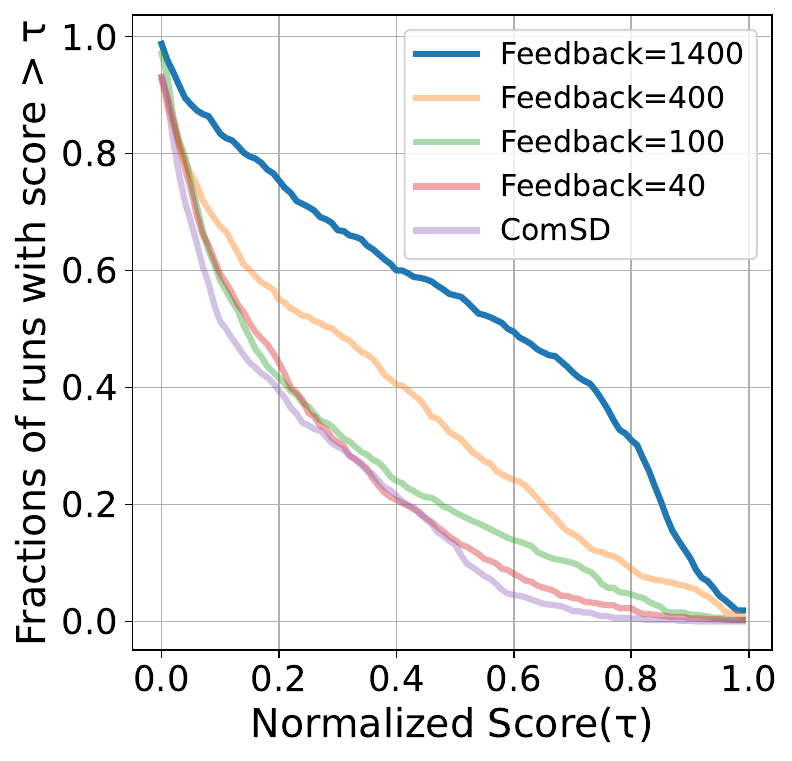}
    \caption{Aggregated and normalised score distributions for 12 downstream tasks on URLB over zero-shot evaluation. (a) shows the effect of different feedback budgets on SRSD performance. While reducing feedback impacts the zero-shot capability of SRSD, it still performs better than ComSD with a budget of 40.}
    \label{fig: downstream_task|zero_shot|general|feedbacks}
\end{figure}

\subsubsection{(Q4) Does active sampling ensure evenly distributed dataset of semantic labels?}
We analyse the effect of active sampling on label sample from the Human and of SRSD performances. Figure~\ref{fig: active_sampling|labels} shows the budget allocation percentage (\%) allocated to relevant semantic labels with and without active sampling. Without active sampling, more than 50\% of samples were allocated to irrelevant semantic, while, in the Quadruped and Hopper domains, only 15\% or 2\% of samples were labelled as one of the relevant semantics. In contrast, our active sampling scheme allocates more than 50\% of the samples to relevant classes in all domains and up to 79\% in Walker. This shows that it effectively reduces oversampling of irrelevant semantics. Additionally, in Figure~\ref{fig: active_sampling|jain} we measured the resource allocation fairness across relevant classes. To this end we used Jain's fairness index~\cite{jain1984quantitative} metric that measures the equitable distribution of resources among $|\mathcal{C}^+|$ relevant classes, where a value of 1 represents perfect fairness and a value of $1/|C^+|$ signifies the utmost unfairness (see practical implementation section in Appendix~\ref{subsec: apdx|metrics_details} for more details).  We can observe that our active sample also improves the fairness of samples allocated to each relevant class and allows us to collect labels in all relevant semantics. Lastly, we can observe in Figure~\ref{fig: downstream_task|zero_shot|general|ablation} that reducing oversampling of irrelevant semantic and improving fairness allocation in relevant classes can positively influence SRSD performance.

\begin{figure}[htbp]
    \centering
    \includegraphics[width=0.75\textwidth]{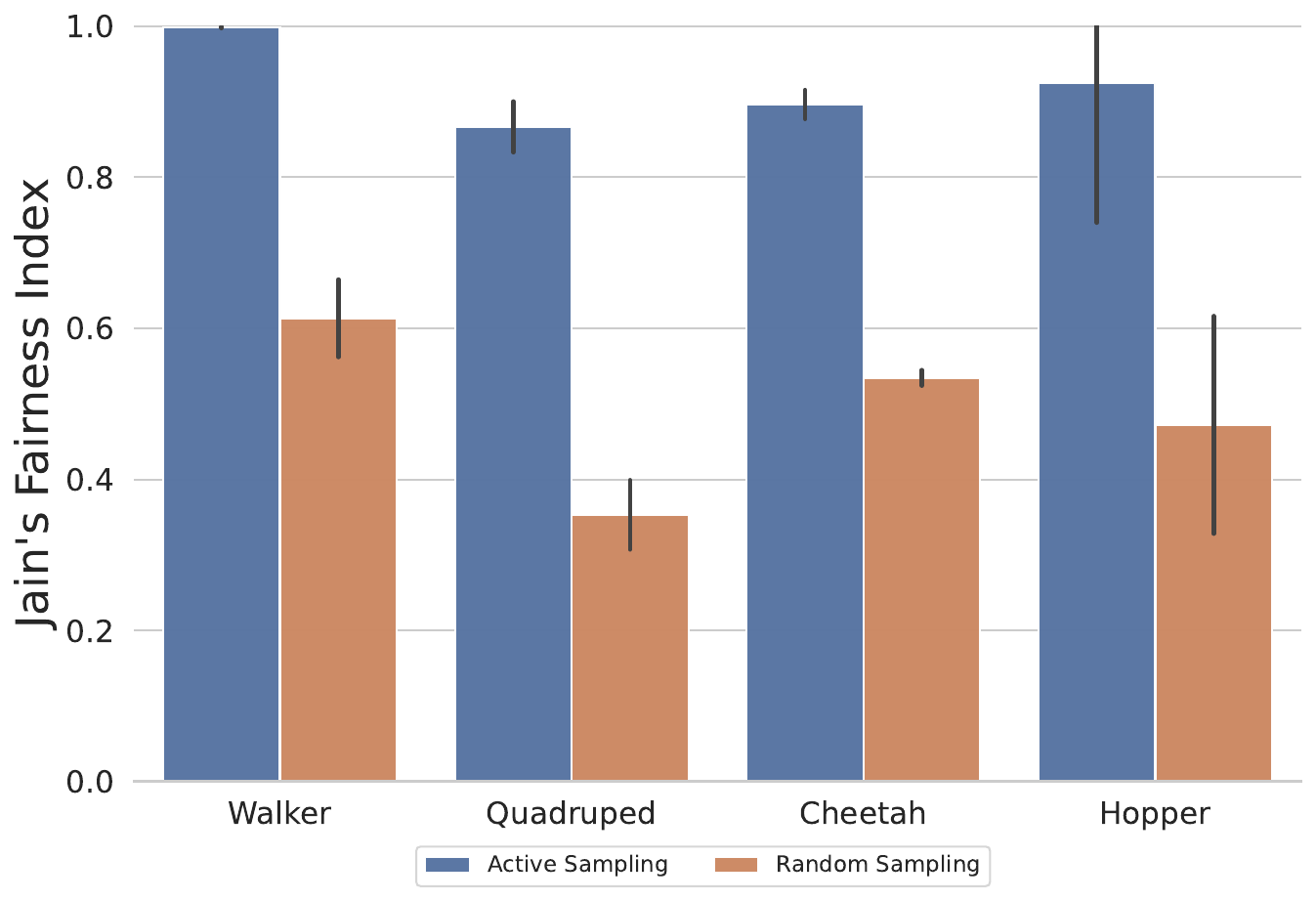} 
    \caption{Measure of allocation's fairness of labels across relevant classes (1 is a uniform distribution, while $1/|\mathcal{C}^+|$, (0.25 here), means that only one relevant class was labelled). Our active sampling methods ensure a balanced dataset unlike random sampling.}
    \label{fig: active_sampling|jain}
\end{figure}
\begin{figure}[htbp]
    \centering
    \includegraphics[width=0.75\textwidth]{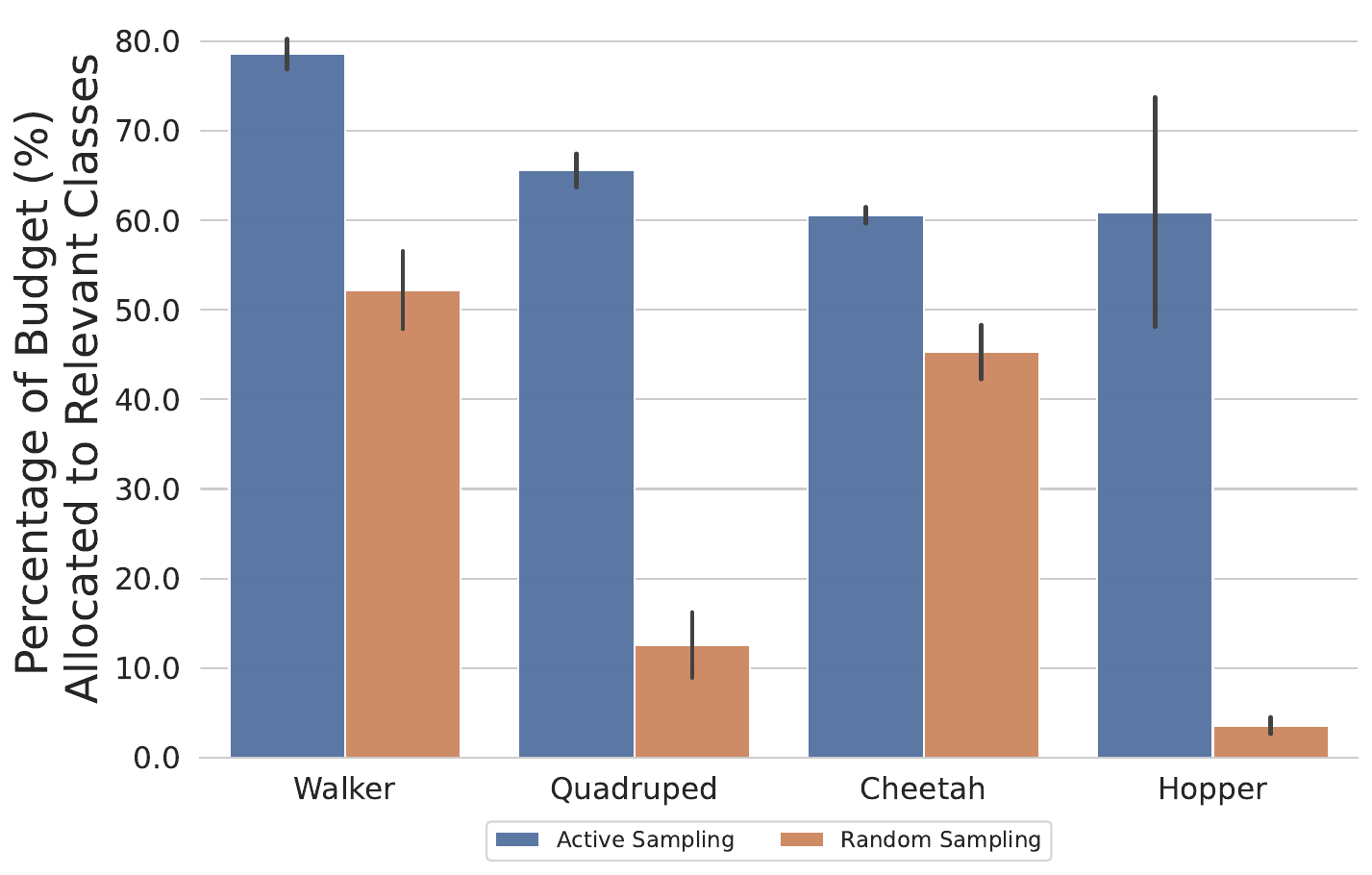} 
    \caption{Percentage of budget (\%) allocated to labels from the relevant semantics. Compared to random sampling, our active sampling allocated more labels to relevant semantics.}
    \label{fig: active_sampling|labels}
\end{figure}

\subsubsection{(Q5) Does incorporating a distributional perspective improve SRSD performance?}  
Figure~\ref{fig: downstream_task|zero_shot|general|ablation} indicates that integrating TQC into SRSD brings statistically meaningful performance gains, as per the Neyman-Pearson statistical testing criterion~\citep{bouthillier2021accounting}, (i.e., the probability of improvement exceeds 0.75 and the associated confidence intervals do not overlap 0.75). Further analysis reveals that the addition of explicit overestimation controls within TQC significantly improves outcomes, with statistical significance. Both components improve SRSD's performance, emphasising the importance of managing aleatoric uncertainty and mitigating overestimation, particularly in frameworks involving multiple intrinsic reward functions. 

\begin{figure}[htbp]
    \centering
    \includegraphics[width=0.75\textwidth]{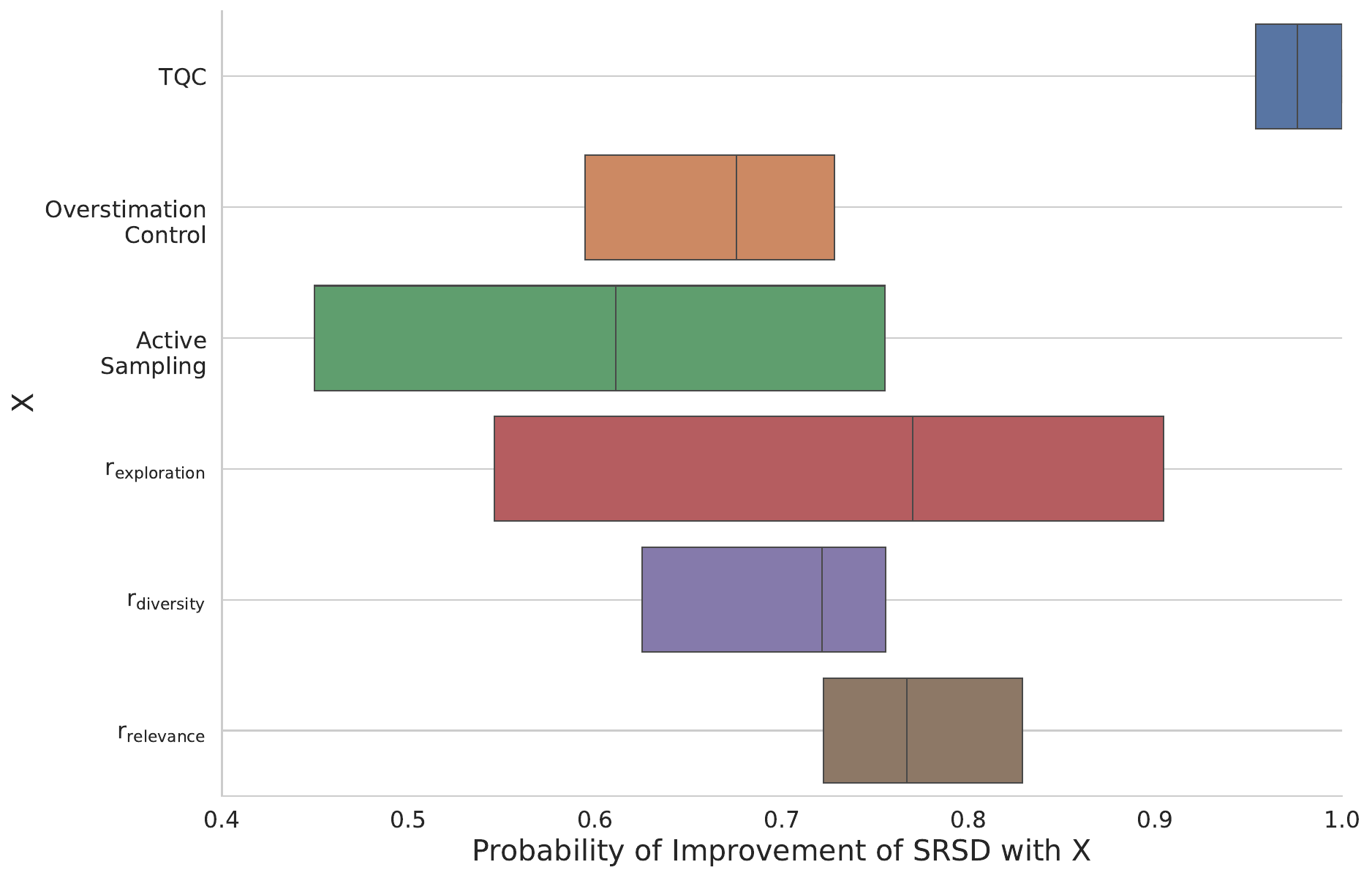}
    \caption{Ablation study of SRSD components. For each variant, the average probability of improvement (with confidence intervals) is reported, quantifying the impact of including these components in SRSD’s architecture.}
    \label{fig: downstream_task|zero_shot|general|ablation}
\end{figure}

\subsubsection{Q6) How does each reward term individually affect training performance?} 
Figure~\ref{fig: downstream_task|zero_shot|general|ablation} shows the average probability of improvement when each reward  signal is integrated into SRSD. All reward components achieved a probability above 0.5, indicating consistent performance gains. Both $r_\text{diversity}$ and $r_\text{relevance}$ exhibit statistically significant improvements, while $r_\text{exploration}$ shows greater variability. This variability likely arises because $r_\text{exploration}$ is most beneficial in tasks requiring exploration, leading to substantial gains in some cases, but more modest improvements when exploration is less critical.

\subsubsection{(Q7) How does SRSD perform with real Human-in-the-Loop Experiment?} 
We further evaluated our method with a human in the loop by replicating the 2D navigation experiment with four semantics. Authors familiar with the task provided 400 feedback signals across 12 query sessions (1.5 minutes each; $\approx$20 minutes total). To collect semantic labels for the human-in-the-loop experiment we implemented an interface modelled after~\citep{hussonnois2025human} and adapted it for semantic labelling (Figure\ref{fig: downstream_task|fine_tuning|tasks}). The layout is as follows: brief task instructions appear at the top (shared across all experiments). The left panel displays each trajectory segment as a single static image of the agent’s path, following~\citep{hussonnois2025human} to enable rapid feedback, for more complex behaviours (e.g., in the DeepMind Control suite), this can be replaced by a short video. The right panel provides labelling controls, including a predefined \textit{Irrelevant} class and an \textit{Add Semantic} option that lets annotators define new semantic categories up to a preset maximum. Selecting a label advances to the next segment in the selected batch, and annotators can navigate back to revise previous labels with \textit{Next} and \textit{Previous}. After all queries are labelled, the user saves them with \textit{Save}, generating a \textit{labels.json} file. Upon closing the interface, this file is automatically loaded and the SRSD algorithm resumes.

\begin{figure*}[htbp]
    \centering
    \includegraphics[width=0.95\textwidth]{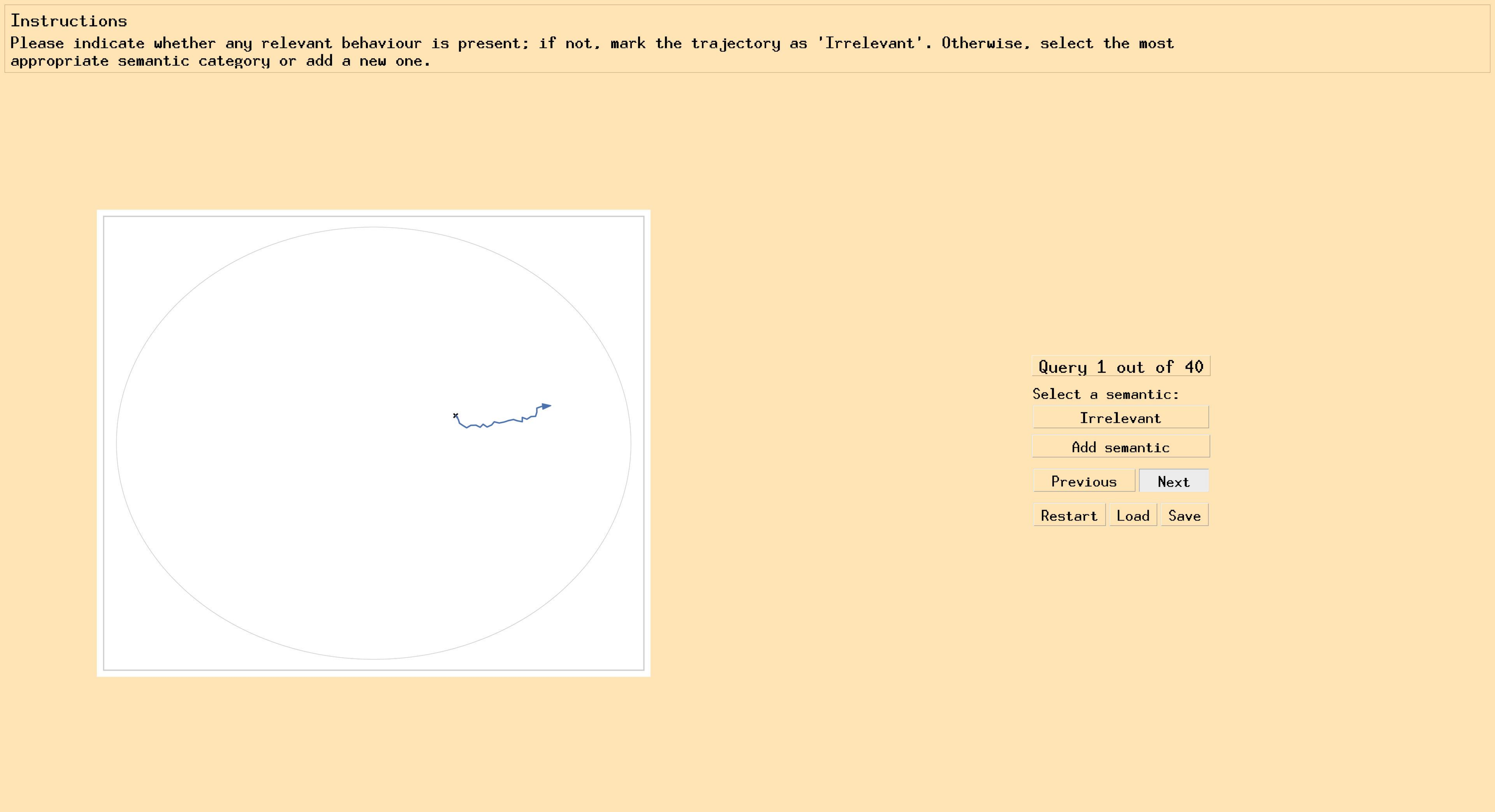} 
    \caption{Interface used to collect human semantic feedback during skill discovery.}
    \label{fig: appendix|feedback|2dnav|app}
\end{figure*}

Qualitatively, skill sets learned under simulated and real feedback are semantically and visually comparable, as shown in Figure~\ref{fig: appendix|feedback|2dnav|general} in the Appendix. We also measured a 30\% disagreement rate between simulated and human labels, likely due to the simulated and real annotators' stochasticity and, more importantly, mismatches between the simulator’s hand-designed reward and human judgements (e.g., trajectories that  are close but outside a target region are rejected by the simulator yet often deemed acceptable by humans). Despite label noise, our method remains robust.

\begin{figure}[htbp]
    \centering
    \begin{subfigure}[b]{0.45\textwidth}
        \centering
        \includegraphics[width=0.95\textwidth]{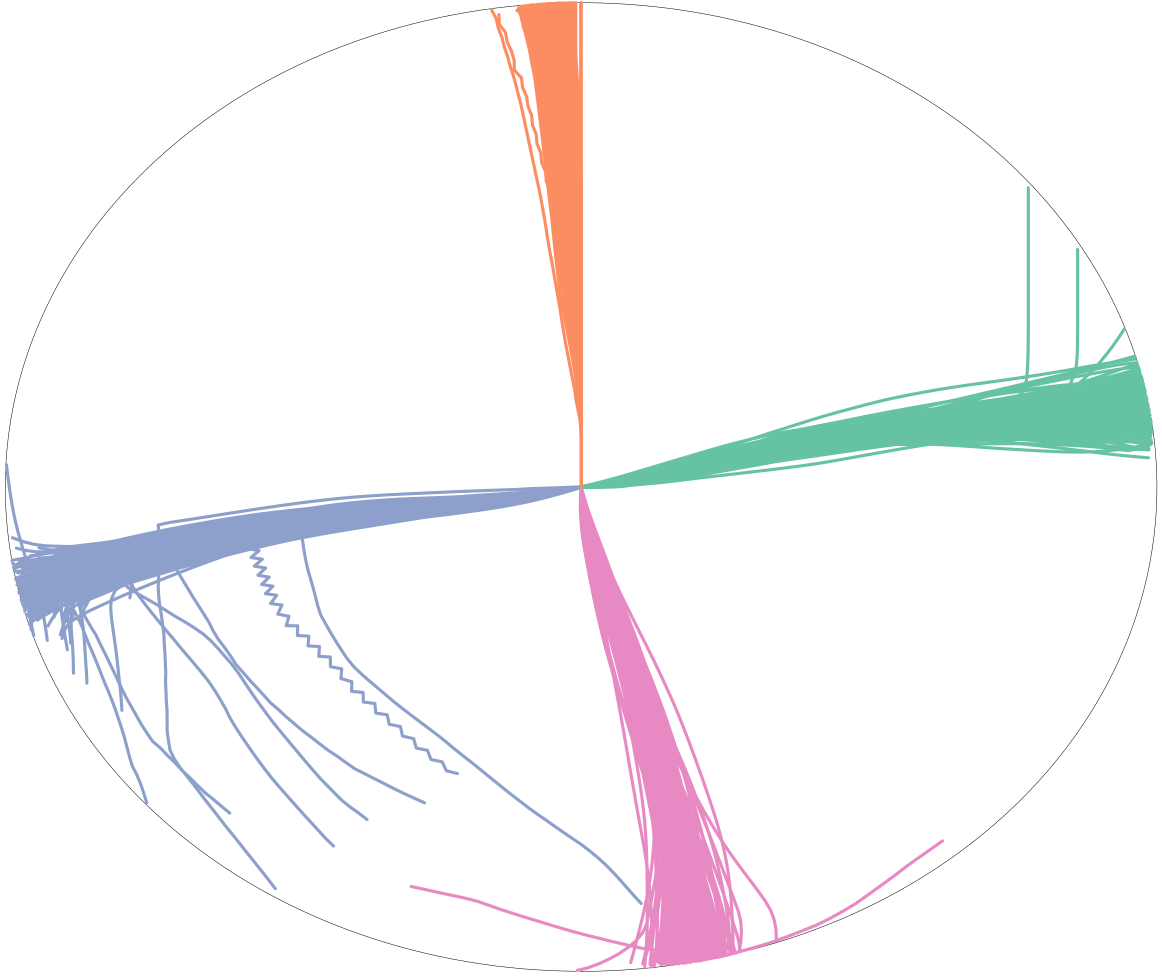} 
        \caption{Skill Set obtained from simulated human feedback.}
         \label{fig: appendix|feedback|2dnav|simulated}
    \end{subfigure}
    \hfill
    \begin{subfigure}[b]{0.45\textwidth}
        \centering
        \includegraphics[width=0.95\textwidth]{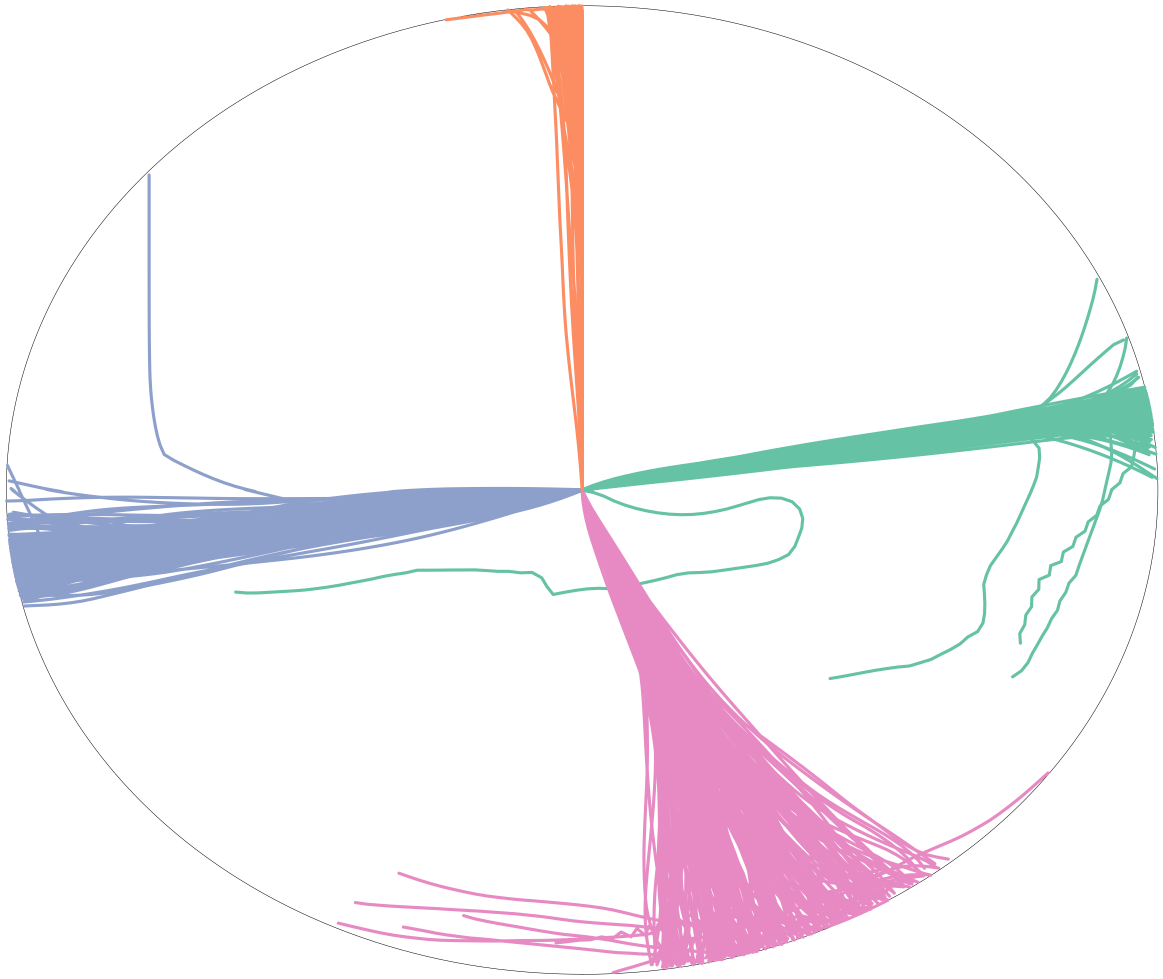} 
        \caption{Skill Set obtained from real human feedback.}
         \label{fig: appendix|feedback|2dnav|real}
    \end{subfigure}
    \caption{Visualisation of Skill Set obtained from simulated human feedback(a) and real human feedback (b). Both skill set are semantically and visually comparable.}
    \label{fig: appendix|feedback|2dnav|general}
\end{figure}

%% file: appendix/proof.tex
\newpage
\subsection{Scaling with respect to $|\mathcal{C}|$: Semantic vs Preference Feedback.}
\label{subsec: apdx|proof}
\textit{Proof.} We provide a more detailed analysis of Proposition \ref{proposition: scaling_c} by defining for each feedback mode the probability to sample a query that is informative about a relevant semantic. 
First, we assume semantics $\mathcal{C}$, being a set of $|\mathcal{C}|$ elements, to be composed of at least one irrelevant semantic (corresponding to all irrelevant types of behaviour) $\mathcal{C}^-$ and possibly multiple relevant semantic (relevant type of behaviour) $\mathcal{C}^+$.

\textbf{Case 1 $|\mathcal{C}|=1$} In this case, every behaviour belongs to the irrelevant semantic, thus both modes of feedback are inefficient.

\textbf{Case 2 $|\mathcal{C}|=2$} In this case, there is only one relevant semantic along with the irrelevant semantic. This setting correspond to the initial setting of preference-based reinforcement learning, where semantic-label feedback is not necessary.

\textbf{General Case $|\mathcal{C}|\geq3$:} In this case, we assume $\sigma$ a segment sampled and $p$ is the probability of $\sigma$ to belongs to the irrelevant semantic. Additionally, we assume that relevant semantics are evenly distributed among $C^+$, thus $p(\sigma \in c_i|c_i \in \mathcal{C}^+) = \frac{1}{|\mathcal{C}|-1}$.

For semantic-label feedback, a query that is informative about a relevant semantic corresponds to any query with a segment that is not irrelevant semantically. Thus:
\begin{align*}
p_{\text{semantic}}&=p(\sigma \notin \mathcal{C}^-) \\
&=1-p  \\ 
&\rightarrow O(1) \qquad \text{ as it is independent of $|\mathcal{C}|$}
\end{align*}

For preference feedback, a query is informative about a selected relevant semantic $c_i$ if the query is composed of a segment $\sigma_1$ of the selected relevant semantic and a segment $\sigma_2$ from any other semantics. Thus :
\begin{align*}
p_{\text{preference}}&=p(\sigma_1\in c_i, \sigma_2 \in c_{j,j\neq i })
\end{align*}
Each segment is sampled independently, so we get:
\begin{align*}
p_{\text{preference}}&=p(\sigma_1\in c_i) \times p(\sigma_2 \in c_{j,j\neq i })
\end{align*}
The probability to select a segment from a selected relevant semantic $p(\sigma_1\in c_i)$ is the probably of selecting a segment that is not irrelevant semantically times the probability selecting a relevant semantic among all relevant semantics which correspond to :
\begin{align*}
p(\sigma_1\in c_i)= p(\sigma_1\notin \mathcal{C}^-) \times p(\sigma \in c_i|c_i \in \mathcal{C}^+) = (1-p) \times (\frac{1}{|\mathcal{C}|-1}) 
\end{align*}
Thus we finally get:
\begin{align*}
\begin{split}\label{eq: apdx|p_pref}
p_{\text{preference}}&= (\frac{1-p}{|\mathcal{C}|-1}) \times (1 -( \frac{1-p}{|\mathcal{C}|-1}))\\ 
&=( \frac{1-p}{|\mathcal{C}|-1}) \times( \frac{|\mathcal{C}|-2+p}{|\mathcal{C}|-1} )\\
&\rightarrow  O(\frac{1}{|\mathcal{C}|}) \\
\end{split}
\end{align*}

%% file: appendix/implementation_details.tex
\subsection{Practical Implementation}
\label{subsec: apdx|implementation_details}
\subsubsection{Defining $z$ and practical implementation of $f$}
In our experiment, the skill $z \in \mathbb R^N$ is a continuous vector of dimension $N$. Of the $N$ dimension, the $N-|\mathcal{C}^+|$first elements are sampled independently of a standard normal distribution and then optionally normalised. The remaining $|\mathcal{C}^+|$ elements represent a one-hot vector and specify the desired relevant semantic class. The mapping function $f: Z '\rightarrow \mathcal{C}^+$ is typically implemented by reading this one-hot segment of $z$ to determine the relevant class associated with a particular skill vector, though any suitable mapping could be used in principle. 

\subsubsection{Additional Practical Implementation}
In this section, we detail practical implementation mechanisms that were not found to significantly improve the performance of SRSD.

\textit{Pseudo-labelling:} For SRSD and HaSD we followed previous work~\citep{park2022surf} work by incorporating pseudo-labels and data-augmentation during the training of the semantic predictor(SRSD) and the reward models(HaSD). Although we observed limited improvements over our method, we believe that leveraging the large dataset of diverse yet unlabelled behaviours that skill discovery inherently generates to improve sampling strategy and semantic-predictor capability is a promising research direction.

\textit{Skill Sampling:} In SRSD, each skill is associated with a latent variable $z$, whose final $|\mathcal{C}|-1$ components are replaced with a one-hot vector representing the selected relevant semantic skills performed. To select which relevant semantics to execute, a naive approach would be to uniformly sample all available semantics. However, this can result in imbalanced exploration, where certain semantics are easier to discover and thus receive more feedback.To address this, we adopt an adaptive sampling strategy: we track the number of feedback instances collected for each relevant semantic and set the sampling probability inversely proportional to this count. This encourages the selection of semantics with less feedback. This promotes more balanced coverage across all semantic classes and enables more uniform exploration and annotation of skills. This mechanism did not yield much impact, partially due to the active sampling scheme that balances feedback collected among semantics.

\subsubsection{Implementation Details and Hyperparameters}
\paragraph{Environments:} We used a (\href{https://github.com/shufflebyte/gym-nav2d}{custom 2D Navigation}) publicly released repository to experiment in simpler navigation environments that we customise for our experiment. For Deepmind Control Suite we use the the DMC custom task released by\href{https://github.com/rll-research/url_benchmark}{URLB}.
\paragraph{Algorithms:} As our backbone reinforcement learning algorithm, we use the TD3 implementation from the publicly available codebase LeanRL\cite{huang2022cleanrl}. For ComSD we use the publicly available code of \href{https://github.com/rll-research/cic}{CIC}. For TQC we used the publicly available codebase for (\href{https://github.com/SamsungLabs/tqc_pytorch?tab=readme-ov-file}{TQC}). For HaSD we adapted the publicly available codebase for (\href{https://github.com/HussonnoisMaxence/HaSD-AAMAS}{HaSD}) to fit a similar structure than SRSD. We report other hyper parameters used in Tables \ref{td3-table},  \ref{tqc-table} for TQC, \ref{comsd-table}, \ref{rlhf-table} for skill discovery and \ref{sf-table} for skill fine-tuning.
\paragraph{Resources.} All experiments are conducted on an Ubuntu 20.4 server with 36 cores CPU, 767GB RAM, and a V100 32GB GPU with CUDA version 12.0, each run in Sections \ref{sec: experiments} took  around 10h for pre-training and 15 minutes for skill fine-tuning.

\begin{table}[htbp]
  \caption{Hyperparameter TD3 Skill Discovery}
  \label{td3-table}
  \centering
  \begin{tabular}{ll}              \\
    \toprule
    Hyperparameter     & Value    \\
    \midrule
    Training iteration          & 1M(Nav2D), 2M(DNC)       \\
    Learning rate critic       & \begin{math}3.0 \times 10^{-04}\end{math} ,\begin{math}1.0 \times 10^{-04}\end{math}       \\
    Learning rate actor         & \begin{math}3.0 \times 10^{-04}\end{math} , \begin{math}1.0 \times 10^{-4}\end{math}     \\
    Update policy frequency     & 2       \\
    Update-to-data              & 1       \\
    Optimiser                   &  Adam \\
    Minibatch size              &  1024         \\
    Discount factor $\gamma$    &  0.99         \\
    Replay buffer size          &  \begin{math}10^6\end{math}         \\
    Hidden layers               &  2        \\
    Hidden units per layers     & 1024(Nav2D), 512(DMC)        \\
    Target network smoothing coefficient      &  0.995        \\
    Exploration noise          &  0.2        \\
    Policy noise: & 0.2\\
    Noise clip: & 0.5\\
    \bottomrule
  \end{tabular}
\end{table}

\begin{table}[htbp]
  \caption{Hyperparameter ComSD}
  \label{comsd-table}
  \centering
  \begin{tabular}{ll}              \\
    \toprule
    Hyperparameter     & Value    \\
    \midrule
    Learning rate discriminator       & \begin{math}1.0 \times 10^{-04}\end{math} (Nav2d)  \\
    Hidden layers      &  2        \\
    Hidden units per layers      & 1024   \\
    Z dim  & 64(Nav2D), 32(DMC) \\
    Z vector & $z\sim \mathcal N(0,1)$ continuous \\
    Z update frequency & 50 \\
    k KNN &  16 \\
    T      & 0.5    \\    
    \bottomrule
  \end{tabular}
\end{table}

\begin{table}[htbp]
  \caption{Hyperparameter Skill Labelling learning}
  \label{rlhf-table}
  \centering
  \begin{tabular}{ll}              \\
    \toprule
    Hyperparameter     & Value    \\
    \midrule
    Learning rate       & \begin{math}3.0\times 10^{-4}\end{math}      \\
    Optimiser           &  Adam \\
    Minibatch size      &     \\
    Size segment     &    25      \\
    Ensemble size     &  5         \\
    Hidden layers  & 2 \\
    Hidden units per layers & 256 \\
    Queries per feedback session       &  140    \\
    Frequency of feedback session       &  15K(Nav2D), 20k(DMC)     \\
    Start feedback      &  10k(Nav2D), 20k(DMC  \\
    |C|  & 3,5,9,13,17 (Nav2d), 5(DMC)\\
    \bottomrule
  \end{tabular}
\end{table}

\begin{table}[htbp]
  \caption{Hyperparameter TD3 Skill Finetunning}
  \label{sf-table}
  \centering
  \begin{tabular}{ll}              \\
    \toprule
    Hyperparameter     & Value    \\
    \midrule
    Training iteration          & 100K(       \\
    Learning rate critic       & \begin{math}1.0 \times 10^{-04}\end{math}       \\
    Learning rate actor         & \begin{math}1.0 \times 10^{-04}\end{math}     \\
    Update policy frequency     & 2       \\
    Update-to-data              & 1       \\
    Optimiser                   &  Adam \\
    Minibatch size              &  512         \\
    Discount factor $\gamma$    &  0.99         \\
    Replay buffer size          &  \begin{math}10^6\end{math}         \\
    Hidden layers               &  2        \\
    Hidden units per layers     & 1024        \\
    Target network smoothing coefficient      &  0.995        \\
    Exploration noise          &  0.2        \\
    Policy noise: & 0.2\\
    Noise clip: & 0.5 \\
    \bottomrule
  \end{tabular}
\end{table}

\begin{table}[htbp]
  \caption{Hyperparameter TQC}
  \label{tqc-table}
  \centering
  \begin{tabular}{ll}              \\
    \toprule
    Hyperparameter     & Value    \\
    \midrule
    Number of quantiles: & 25\\
    Number of networks: & 3\\
    Number of top quantiles to drop: 2 (pre-training), 5(fine-tuning) \\
    \bottomrule
  \end{tabular}
\end{table}

%% file: appendix/metrics.tex
\subsection{Practical Implementation}
\label{subsec: apdx|metrics_details}
\subsubsection{Jain's Fairness Index}

Jain’s fairness index~\cite{jain1984quantitative} is a metric commonly used for evaluating fairness in static resource allocation among $n$ users, such as dividing a cake into $n$ slices. If each users $i$ receives $x_i \in \mathbb{R}^+$, Jain’s index $Fairness(x)$ produces a value between $0$ and $1$, and is defined as: 

\begin{equation} 
Fairness(x)= \frac{\left(\sum{i=1}^n x_i\right)^2}{n \sum_{i=1}^n x_i^2} \label{eq:jain}.
\end{equation}
The higher the value of $Fairness(x)$, the fairer the resource-allocation process with the maximum value of 1 indicating that all users receive the same share. Conversely, the lower the value, the more unfair the allocation is, with the minimum attainable value being $Fairness(x) = 1/n$. This occurs when a single users receives all resources and the others receive none.

%% file: figures/algo.tex
\begin{algorithm}[htbp]\small
\textbf{Initialise} \begin{math}\mathcal B, \zeta^b\end{math}, \begin{math}\pi_\theta, q_\psi, g_{\phi_{1,2}}  \end{math} \;
\For{each epoch}{ 
    \For{each episode }{
        Sample trajectory \begin{math}\tau\end{math} with \begin{math}\pi_\theta(a_t|s_t,z)\end{math} and \begin{math}z \sim p(z)\end{math} \;
        Store trajectory \begin{math}\tau\end{math} in \begin{math} \mathcal{B}\end{math}
    }
    \If { it's time to query instructor} {
        Select queries  $\{ \sigma_i \}^n_{i=1} \leftarrow ActiveSampling(\mathcal B, q_\psi) $ \;
        Collect labels from instruction $\{c_i\}^n_{i=1} $ \;
        Store transitions \begin{math} \zeta^b \leftarrow\zeta^b\cup \{(\sigma_i, c_i)\}^n_{i=1} \end{math}\;
        
        Update \begin{math}q_\psi\end{math} with gradient descent on cross-entropy loss\;    
        }
    Update \begin{math} g_{\phi_{1,2}}\end{math} with gradient ascent on $\mathcal L^{NCE}$ \eqref{eq: nce_loss} \;   
    Update \begin{math}\pi_\theta(a|s,z)\end{math} using TD3 \citep{pmlr-v80-fujimoto18a} and TQC \citep{kuznetsov2020controlling} with \begin{math}r^{\text{SRSD}}\end{math} \eqref{eq: RDSD}\;    
}
\caption{Semantically Relevant Skill Discovery (SRSD)}
\label{alg: SRSD}
\end{algorithm}

%% file: figures/algo_sampling.tex
\begin{algorithm}
\textbf{Input} Replay Buffer $\mathcal B$, Semantic Predictor $q_\psi$ \;
\textbf{Initialise} Number of feedback to collect $n$\;
\textbf{Initialise} Number of segments to sample $l$ \;
Sample segments from replay-buffer: $\{\sigma_i\}^l_{i=0} \sim \mathcal{B}$ \;
Compute pseudo-labels: $\{ c_i \}^{l}_{i=0} \leftarrow \{q_\psi(\sigma_i) \}^{l}_{i=0} $\; 
Split the large batch into buckets according to pseudo-labels: $ \{ \{\sigma_{i,j}\}^{l_j}_{i=0}\}_{j=0}^{|\mathcal C|}$\; 
Remove the bucket with pseudo-label as 'irrelevant': $ \{ \{\sigma_{i,j}\}^{l_j}_{i=0}\}_{j=1}^{|\mathcal C|}$\;  
Select the first $\min (l_j, \frac{n}{|\mathcal C|-1})$-samples for each buckets\;
 \Return $ \{ \{\sigma_{i,j}\}^{\min (l_j, \frac{n}{|C|-1})}_{i=0}\}_{j=1}^{|\mathcal C|}$\;
\caption{Active Sampling}
\label{alg: Active_Sampling}
\end{algorithm}

%% file: appendix/human_feedback.tex
\subsection{Simulating Human Feedback}
\label{subsec: apdx|simulating_human_feedback}
To facilitate experimentation and reproducibility, we simulated human feedback as done in prior work \citep{2021pebble,hussonnois2025human}. Human feedback is modelled by first computing the probability of each relevant semantics being used independently. We then select the semantic with the highest probability and use this to stochastically choose between labelling the segment as the relevant semantic or irrelevant semantically. For each relevant semantics, the probability is a re-scaled sum of normalised rewards over a segment of length $H$: $p(c|\sigma) = \min\left(\frac{\sum_t^H r_t^c}{H \cdot \tau^{c}}, 1\right)$, where $c \in \mathcal{C}^+$ and $\tau^{c}$ is a threshold. This re-scaling accounts for varying reward magnitudes across relevant semantics. We then deterministically select $p(c^+|\sigma)=\arg\max_{c\in \mathcal{C}^+}p(c|\sigma)$ to sample a Bernoulli Distribution, labelling a segment as $c^+$ with probability $p(c^+|\sigma)$ and as irrelevant $c^-$ with probability $1-p(c^+|\sigma)$. This method realistically models human labelling stochasticity, where the likelihood of identifying relevant semantics increases with rewards.

\subsubsection{Simulating Human Irrationality}
Throughout the experiment section, we simulated humans with a ground truth reward and assumed humans followed the stochastic model previously described. 
To assess the sensibility of our methods to human simulation, we previous work~\citep{lee2021b} by testing our methods with various irrationalities. More specifically we evaluate SRSD with a human that can make mistake(there is a probability that the feedback is change), a myopic human (latter part of a segment influences less the feedback) and an amnesic human (latter part of a segment influences more the feedback), we report the detailed performances per task over zero-shot, few-shot and fine-tuning capability in Figures \ref{fig: appendix|downstream_task|zero_shot|tasks|human}, \ref{fig: appendix|downstream_task|few_shot|tasks|human} and \ref{fig: appendix|downstream_task|fine_tuning|tasks|human} for each variation. We also report the aggregate normalised performance over zero-shot, few-shot and fine-tuning in Figure \ref {fig: appendix|dmc|general|human}. We can observe that each irrationality negatively impact the performance of our method. Amnesic and Myopic performed similarly, the difference might be mitigated by the small segment size (25) choose for the experiments. In the same way that amnesic and myopic data are detrimental to our methods, collecting 10 \% of erroneous feedback would only negatively affect SRSD. However with more than 50 \% of erroneous feedback the 0-shot and few-shot capability are significantly degraded. Despite this, all skill sets were improved after fine-tunning.

\begin{figure}[htbp]
    \centering
    \begin{subfigure}[b]{0.32\textwidth}
        \centering
        \includegraphics[width=\textwidth]{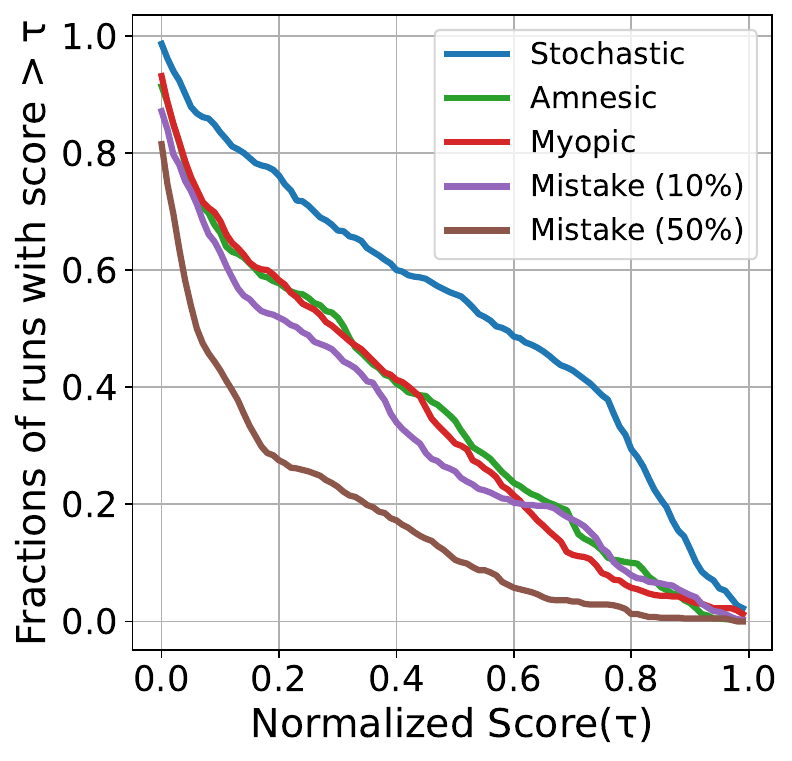} 
        \caption{0-shot}
         \label{fig: appendix|downstream_task|zero_shot|general|human}
    \end{subfigure}
    \hfill
    \begin{subfigure}[b]{0.32\textwidth}
        \centering
        \includegraphics[width=\textwidth]{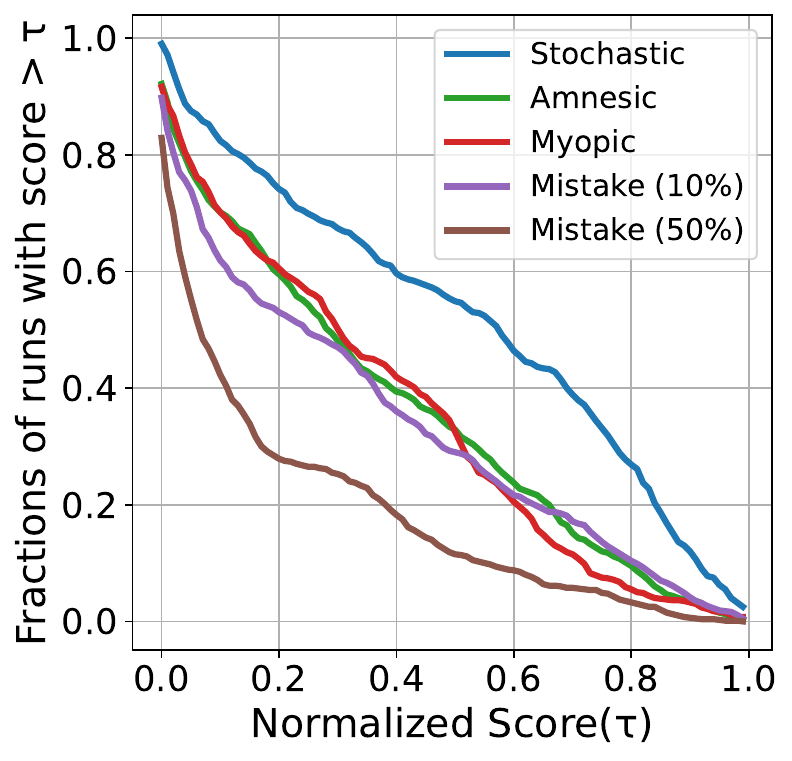} 
        \caption{Few-shot}
         \label{fig: appendix|downstream_task|few_shot|general|human}
    \end{subfigure}
    \hfill
    \begin{subfigure}[b]{0.32\textwidth}
        \centering
        \includegraphics[width=\textwidth]{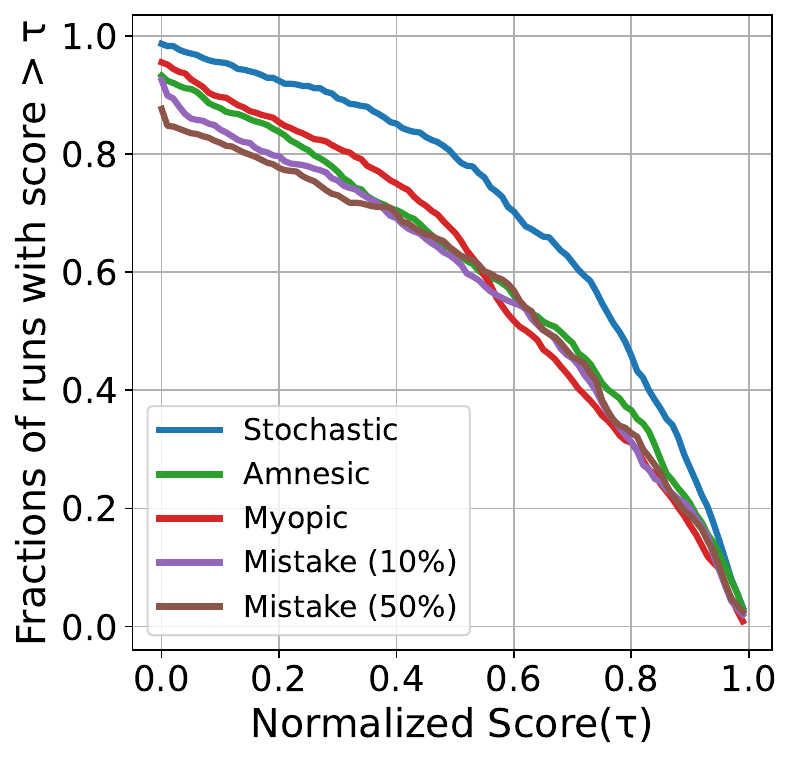} 
        \caption{Fine-Tuning}
        \label{fig: appendix|downstream_task|fine_tuning|general|human}
    \end{subfigure}
    \caption{Aggregated and normalised score distributions for 12 downstream tasks on URLB for various simulated human irrationality.}
    \label{fig: appendix|dmc|general|human}
\end{figure}

\begin{figure}[htbp]
    \centering
    \includegraphics[width=1.0\textwidth]{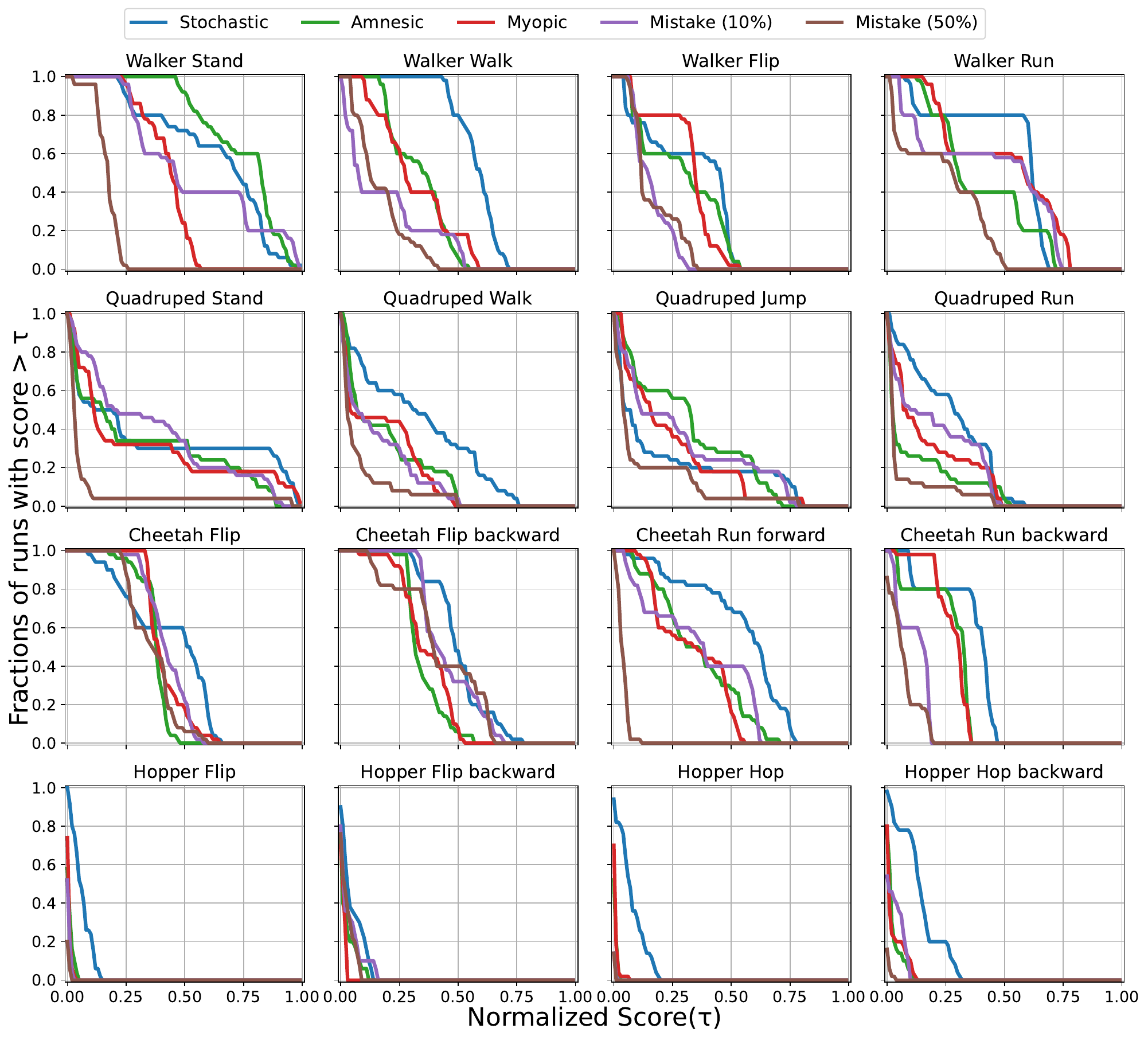} 
    \caption{Score distributions of 12 downstream tasks on URLB over 0-shot evaluation for various simulated human irrationality. Score distributions show the fraction of runs above a certain score (higher curve is better)\cite{agarwal2021deep}}
    \label{fig: appendix|downstream_task|zero_shot|tasks|human}
\end{figure}

\begin{figure}[htbp]
    \centering
    \includegraphics[width=1.0\textwidth]{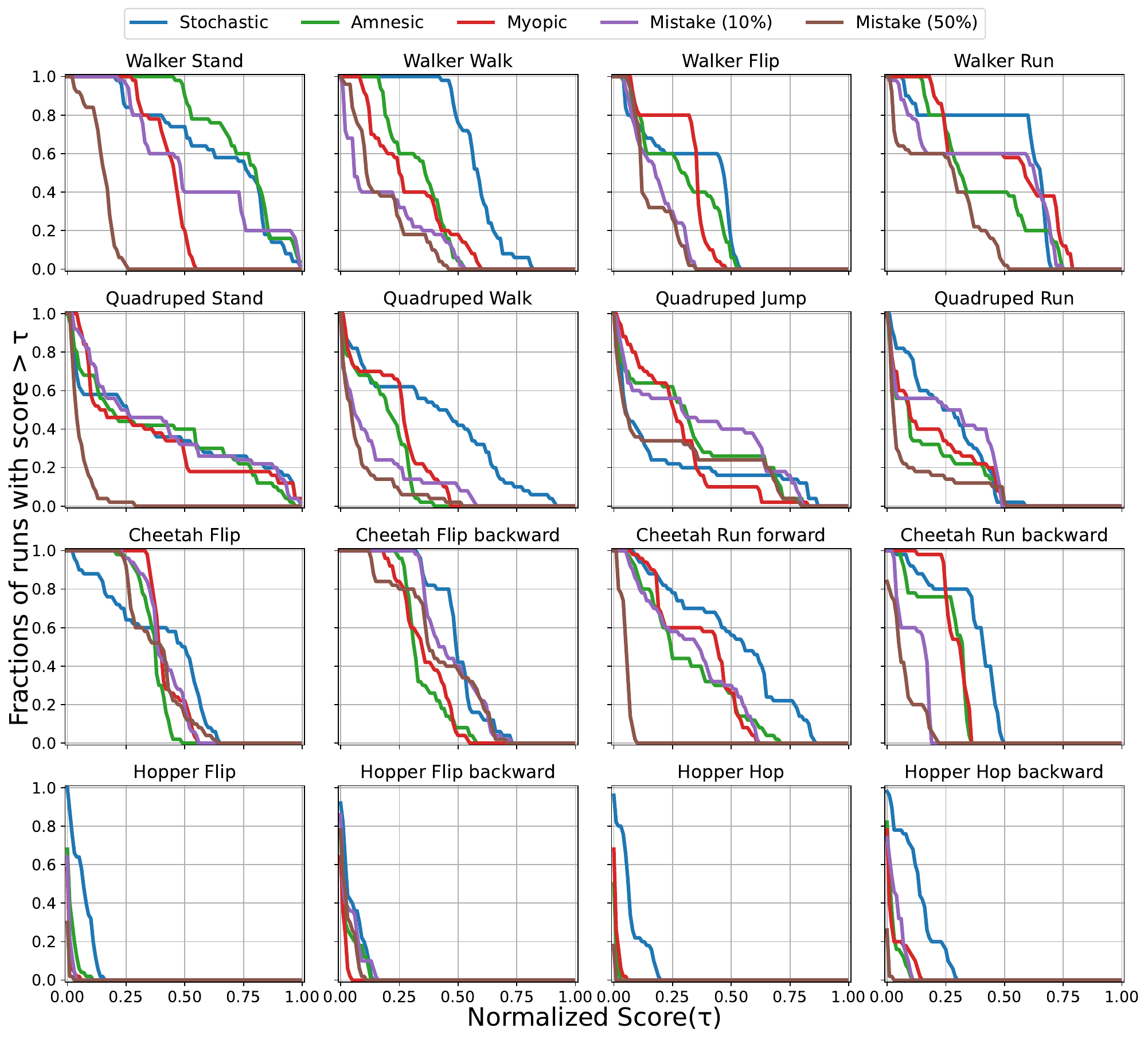} 
    \caption{Score distributions of 12 downstream tasks on URLB over few-shot evaluation for various simulated human irrationality. Score distributions show the fraction of runs above a certain score (higher curve is better)\cite{agarwal2021deep}}
    \label{fig: appendix|downstream_task|few_shot|tasks|human}
\end{figure}

\begin{figure}[htbp]
    \centering
    \includegraphics[width=1.0\textwidth]{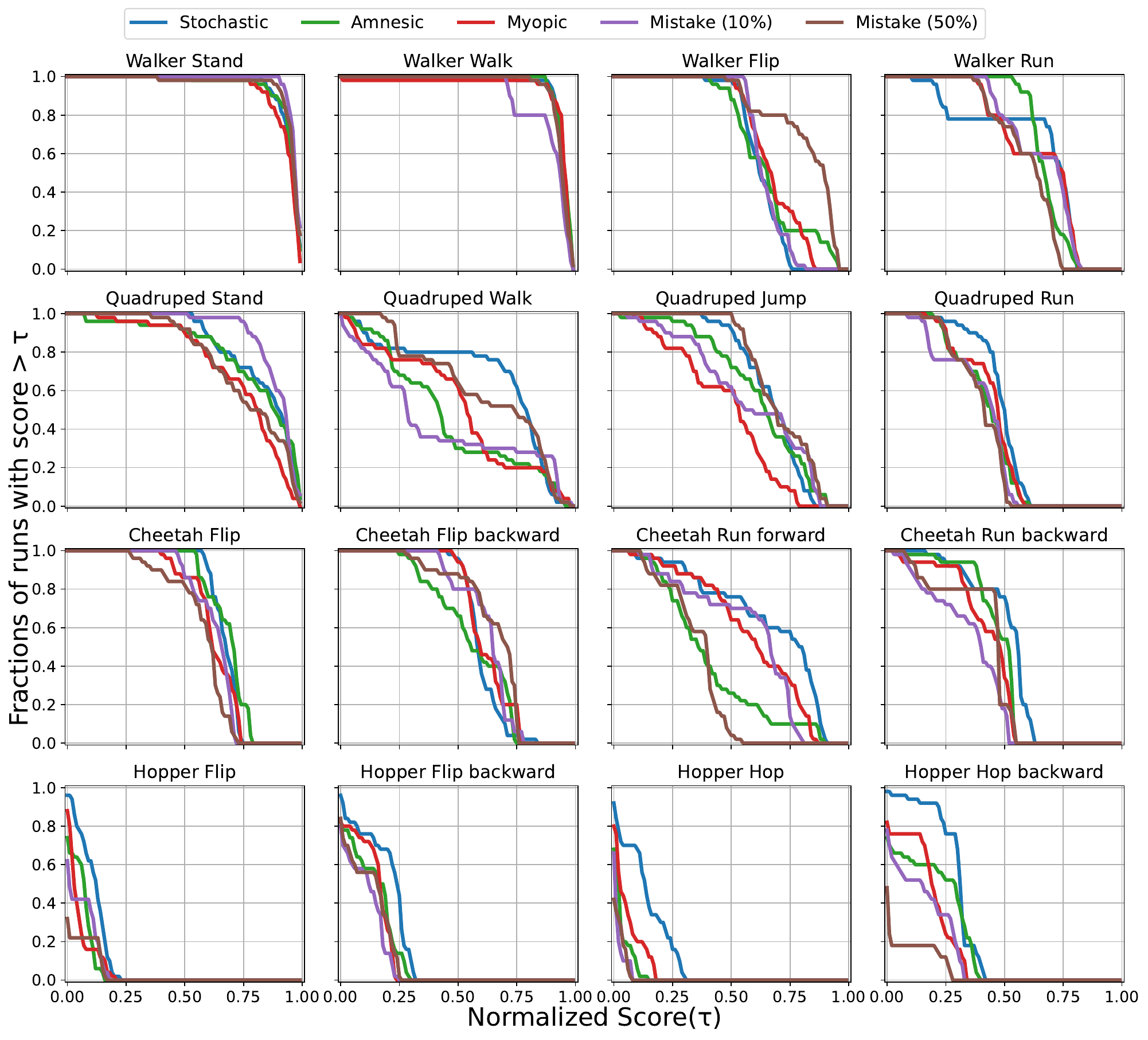} 
    \caption{Score distributions after fine-tuning skills over 12 downstream tasks on URLB for various simulated human irrationality. Score distributions show the fraction of runs above a certain score (higher curve is better)\cite{agarwal2021deep}. SRSD achieves competitive performances across all tasks and agents.}
    \label{fig: appendix|downstream_task|fine_tuning|tasks|human}
\end{figure}